\newcommand{\veta}{\bm{\eta}}
\theoremstyle{plain}
\newtheorem{theorem}{Theorem}[section]
\theoremstyle{definition}
\theoremstyle{remark}
\def\eqref#1{equation~\ref{#1}}
\numberwithin{equation}{section}
\def\1{\bm{1}}
\def\vf{{\bm{f}}}
\def\vh{{\bm{h}}}
\def\vw{{\bm{w}}}
\def\vx{{\bm{x}}}
\def\vy{{\bm{y}}}
\def\vz{{\bm{z}}}
\DeclareMathAlphabet{\mathsfit}{\encodingdefault}{\sfdefault}{m}{sl}
\SetMathAlphabet{\mathsfit}{bold}{\encodingdefault}{\sfdefault}{bx}{n}
\newcommand{\E}{\mathbb{E}}
\newcommand{\norm}[1]{\left\|#1\right\|}
\title{Ambient Diffusion Posterior Sampling:\\Solving Inverse Problems with Diffusion\\Models trained on Corrupted Data}
\begin{document}
\maketitle

\vspace{-1.5cm}
\begin{center}
    \begin{tabular}{ccc}
        \textbf{Asad Aali}$^*$ & \textbf{Giannis Daras}$^*$ & \textbf{Brett Levac} \\
        Stanford University & MIT CSAIL & UT Austin \\
        UT Austin & IFML & IFML \\[12pt]

        \textbf{Sidharth Kumar} & \textbf{Alexandros G. Dimakis} & \textbf{Jonathan I. Tamir} \\
        UT Austin & UC Berkeley & UT Austin \\
        IFML & Bespoke Labs & IFML \\
    \end{tabular}
\end{center}

\begin{abstract}
We provide a framework for solving inverse problems with diffusion models learned from linearly corrupted data. Firstly, we extend the Ambient Diffusion framework to enable training directly from measurements corrupted in the Fourier domain. Subsequently, we train diffusion models for MRI with access only to Fourier subsampled multi-coil measurements at acceleration factors R$=2, 4, 6, 8$. Secondly, we propose \textit{Ambient Diffusion Posterior Sampling} (A-DPS), a reconstruction algorithm that leverages generative models pre-trained on one type of corruption (e.g. image inpainting) to perform posterior sampling on measurements from a different forward process (e.g. image blurring). For MRI reconstruction in high acceleration regimes, we observe that A-DPS models trained on subsampled data are better suited to solving inverse problems than models trained on fully sampled data. We also test the efficacy of A-DPS on natural image datasets (CelebA, FFHQ, and AFHQ) and show that A-DPS can sometimes outperform models trained on clean data for several image restoration tasks in both speed and performance.\footnote[0]{$^*$Equal contribution.}

\end{abstract}
\vspace{-0.25cm}
\begin{figure}[hbt!]
    \centering
    \includegraphics[width=0.50\textwidth]{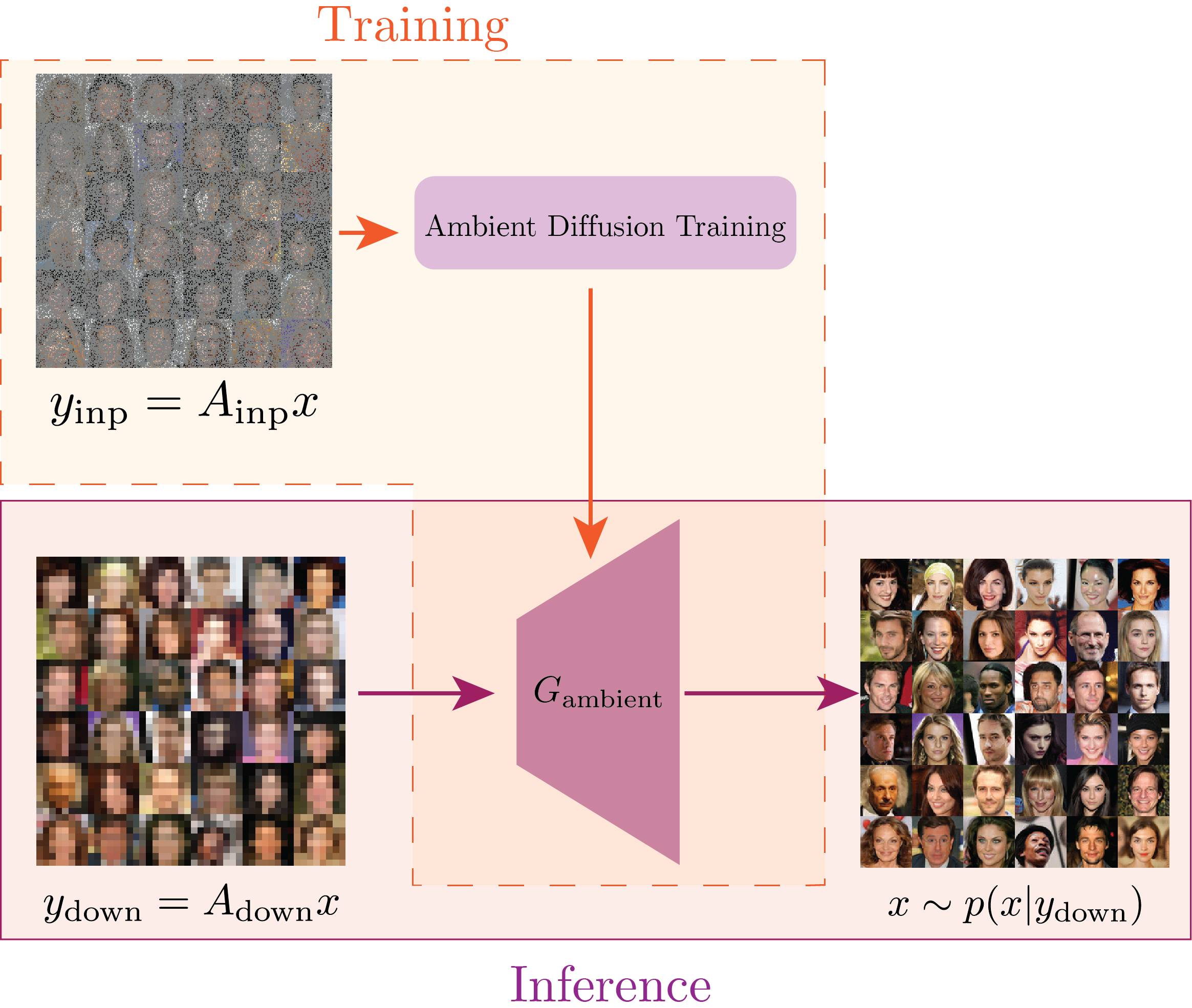}
    \caption{\small{Ambient Diffusion Posterior Sampling (Ambient DPS). During training, we only have access to linearly corrupted data from a forward operator $A_{\mathrm{train}}$.} We use the Ambient Diffusion framework to learn a generative model, $G_{\mathrm{ambient}}$, for the uncorrupted distribution, $p(\vx_0)$. At inference time, we sample from the posterior distribution $p(\vx_0 | \vy_{\mathrm{A_{\mathrm{inf}}}})$, for measurements $\vy_{\mathrm{inf}}$ coming from a different forward operator, $A_{\mathrm{inf}}$. }
    \label{fig:method}
\end{figure}

\begin{figure}[hbt!]
    \centering
    \includegraphics[width=1\textwidth]{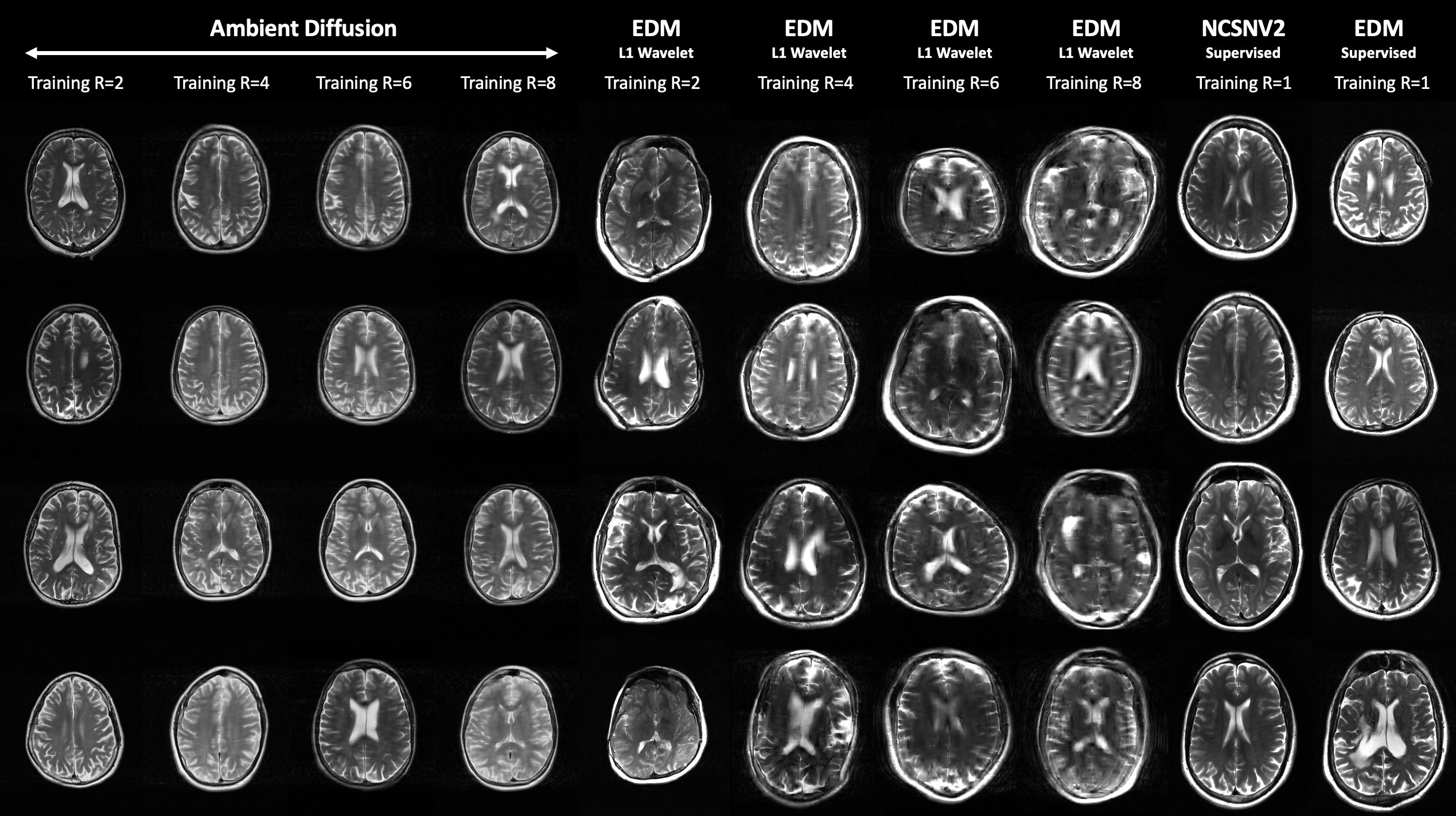}
    \caption{\small{Prior samples from diffusion models trained on MRI scans. Columns $1-4$: Diffusion models trained on subsampled MRI scans at acceleration factors $R=2,4,6,8$, using the Ambient Diffusion framework extended for Fourier subsampled training. Columns $5-8$: EDM models trained with L1-wavelet reconstructions of subsampled scans at $R=2,4,6,8$. Column $9$: NCSNV2 trained with fully sampled scans. Column $10$: EDM trained with fully sampled scans. We observe that Ambient Diffusion models consistently produce high-quality and realistic MRI scans even in high acceleration regimes.}}
    \label{fig:gens}
\end{figure}

\section{Introduction}
For some applications, it is expensive or impossible to acquire fully observed data~\citep{Akiyama_2019, gao2023image,SSDU} but possible to acquire partially observed samples. Furthermore, it may be desirable to train generative models with corrupted data since that reduces memorization of the training set~\citep{daras2023ambient, carlini2023extracting, somepalli2022diffusion}. Prior works have shown how to train Generative Adversarial Networks (GANs)~\citep{bora2018ambientgan,cole}, flow models~\citep{kelkar2023ambientflow} and restoration models~\citep{noise2noise, krull2019noise2void, SSDU, mri_no_data} with corrupted data. More recently, there has been a shift towards training \textit{diffusion generative models} given corrupted data~\citep{daras2024consistent,daras2023consistent, daras2023ambient, aali2023solving, kawar2023gsure, self_score, noise2score}. What remains unexplored is how generative models trained on a certain type of corruption (e.g. inpainted data) can be used to solve inverse problems with a different forward process (e.g. blurring).

We propose the first framework to solve inverse problems with Ambient Diffusion models~\citep{daras2023ambient}. These models are trained using only access to linear measurements and they estimate the \textit{ambient score}, i.e. how to best reconstruct given an input with a corrupted linear forward operator \textit{corrupted noisy input}. We show how to use these models for solving linear inverse problems outside their training distribution. Our experiments on datasets of natural images and multi-coil MRI show something surprising: Ambient Models can outperform (in the high corruption regime) models trained on clean data while being substantially faster. Our algorithm extends Diffusion Posterior Sampling~\citep{dps} to Ambient Diffusion models. \textbf{Our contributions:\footnote[2]{We open-source our code and Ambient Diffusion models: \href{https://github.com/utcsilab/ambient-diffusion-mri}{github.com/utcsilab/ambient-diffusion-mri}.}}

\footnotetext{}

\begin{figure}[!htp]
    \centering
    \includegraphics[width=0.8\textwidth]{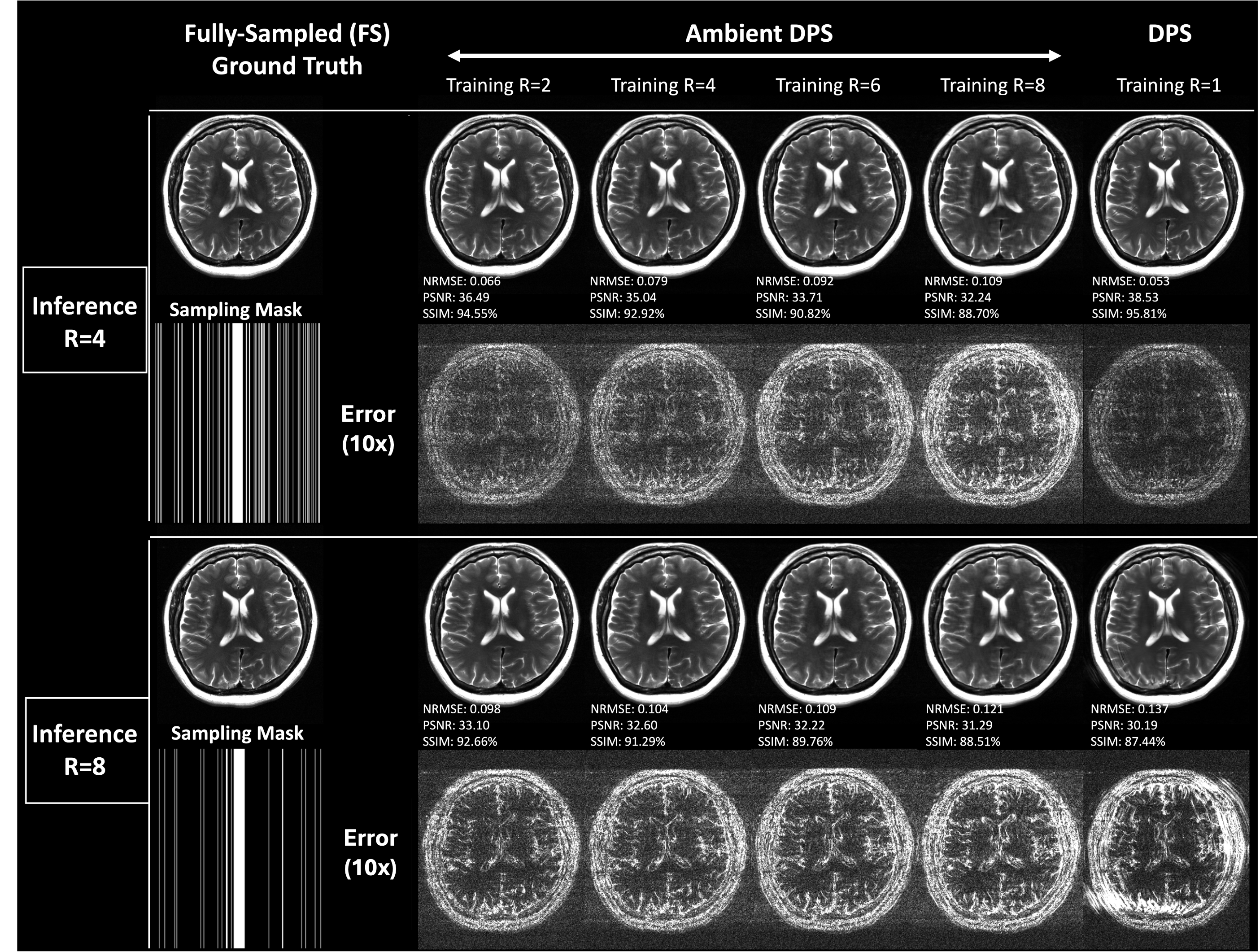}
    \caption{\small{Posterior sampling reconstructions for MRI scans using models trained on Fourier subsampled data at various acceleration factors (Ambient DPS, columns $2-5$) and a model trained on clean data (FS-DPS~\citep{dps}, column $6$). Rows $1$ and $3$ show reconstructions at $R=4$ and $R=8$, respectively, while rows $2$ and $4$ display the difference to the ground truth on a $10\times$ scale.} At high inference acceleration (R=$8$), \textit{Ambient DPS}, outperforms FS-DPS, despite that the underlying models were trained solely on corrupted data.}
    \label{fig:posterior_recon}
\end{figure}

\begin{compactitem}
    \item We propose \textit{Ambient Diffusion Posterior Sampling} (Ambient DPS), an algorithm that uses diffusion models trained on linearly corrupted data as priors for solving inverse problems with arbitrary linear measurement models.
    \item We extend the Ambient Diffusion framework to train models using Fourier subsampled measurements. Specifically, we train on subsampled multi-coil MRI scans at various retrospective acceleration factors (R=$2, 4, 6, 8$); we observe that models trained on subsampled data are better priors for solving inverse problems in the high acceleration regime.
    \item We use pre-trained Ambient Diffusion models to solve inverse problems (compressed sensing, super-resolution) on natural image datasets (CelebA, FFHQ, AFHQ) and show that they can even outperform models trained on clean data in the high corruption regime.
\end{compactitem}

\section{Method}

\subsection{Background and Notation} 

\paragraph{Diffusion Posterior Sampling (DPS).} Diffusion models are typically trained to reconstruct a clean image $\vx_0 \sim p_0(\vx_0)$ from a noisy observation $\vx_t = \vx_0 + \sigma_t \veta, \quad \eta \sim \mathcal N(\mathbf{0}, I)$. Despite the simplicity of the training objective, diffusion models can approximately sample from $p(\vx)$ by running a discretized version of the Stochastic Differential Equation~\citep{ncsnv3, anderson}:
\begin{gather}
    \mathrm{d}\vx = -2\dot \sigma_t(\E[\vx_0 | \vx_t] - \vx_t)\mathrm{d}t + g(t)\mathrm{d}\vw,
    \label{eq:backward_sde}
\end{gather}
where $\vw$ is the standard Wiener process and $\E[\vx_0 | \vx_t]$ is estimated by the network. Given measurement $\vy_{\mathrm{inf}}=A_{\mathrm{inf}}\vx_0$, one can sample from the posterior $p(\vx_0 | \vy_{\mathrm{inf}})$ by running the process:
\begin{gather}
    \mathrm{d}\vx = -2\dot \sigma_t\sigma_t\left(\frac{\E[\vx_0 | \vx_t] - \vx_t}{\sigma_t} + \underbrace{\nabla \log p(\vy_{\mathrm{inf}} | \vx_t)}_{\mathrm{likelihood \ term}}\right) \mathrm{d}t + g(t)\mathrm{d}\vw.
\end{gather}
For most forward operators it is intractable to write the likelihood in closed form. Hence, several approximations have been proposed to use diffusion models for inverse problems~\citep{dps, ddrm, mri_paper, song_mri_paper, chung2022improving, feng2023score, diffusion_pnp}. One of the simplest and most effective approximations is Diffusion Posterior Sampling (DPS)~\citep{dps}. DPS estimates $\vx_0$ using $\vx_t$ and uses the conditional likelihood $p(\vy_{\mathrm{inf}}|\hat \vx_0)$ instead of the intractable term, i.e. DPS approximates $p(\vy_{\mathrm{inf}} | \vx_t)$ with $p(\vy_{\mathrm{inf}} | \vx_0=\E[\vx_0 | \vx_t])$, where $\gamma_t$ is a tunable guidance parameter. The update rule becomes:
\begin{gather}
    \mathrm{d}\vx = -2\dot \sigma_t\sigma_t\left(\frac{\E[\vx_0 | \vx_t] - \vx_t}{\sigma_t} + \gamma_t\nabla_{\vx_t}\log p(\vy_{\mathrm{inf}} | \vx_0=\E[\vx_0 | \vx_t])\right) \mathrm{d}t + g(t)\mathrm{d}\vw,
    \label{eq:dps_update}
\end{gather}

\begin{figure}[!htp]
    \centering
    \includegraphics[width=1\textwidth]{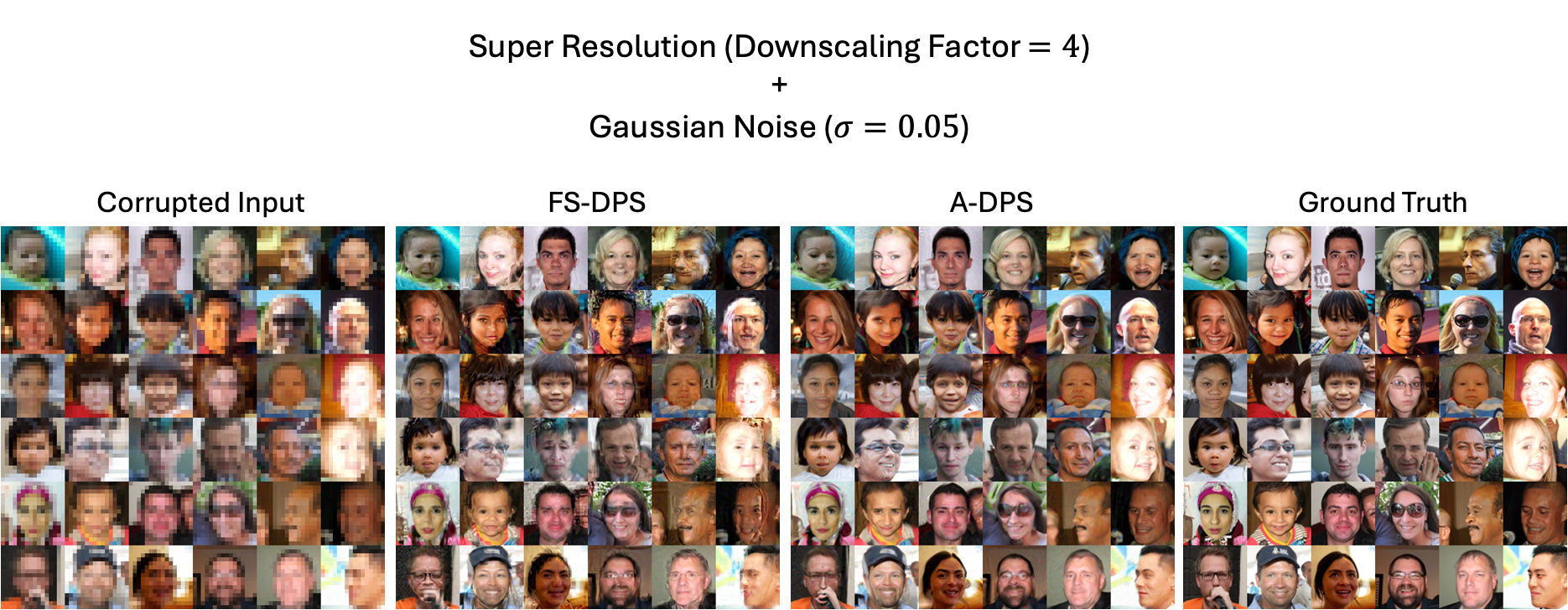}
    \caption{\small{Posterior sampling reconstructions for FFHQ, showing: (1) Corrupted Input: Examples from FFHQ dataset corrupted using Resolution Downscaling ($4\times$) and Gaussian Noise ($\sigma=0.05$), (2) FS-DPS: Reconstruction using Diffusion Posterior Sampling (DPS), with a diffusion model trained with clean fully-sampled data ($p=0.0$), (3) A-DPS: Reconstruction using Ambient-DPS, with a diffusion model trained on randomly inpainted data with erasure probability ($p=0.6$), (4) Ground Truth: Original uncorrupted examples from FFHQ dataset. We observe that \textit{Ambient DPS}, provides better reconstructions (qualitatively) even though the underlying models were trained on corrupted data ($p=0.6$).}}
    \label{fig:super_noisy}
\end{figure}

\paragraph{Ambient Diffusion.} In some settings we do not have a large training set of clean data but we have access to a large set of lossy measurements that we would like to leverage to train a diffusion model for the clean distribution.

The authors of~\citet{daras2023ambient} consider the setting of having access to linearly corrupted data $\{\vy_0=A_{\mathrm{train}}x_0, A_{\mathrm{train}} \}$, where the distribution of $A_{\mathrm{train}}$, denoted as $p(A_{\mathrm{train}})$, is assumed to be known. The ultimate goal is to learn the best restoration model given a linearly corrupted noisy input $\vy_{t, \mathrm{train}} = A_{\mathrm{train}}(\vx_0 + \sigma_t \veta)$, at all noise levels $t$. The authors of~\citet{daras2023ambient} form further corrupted iterates $\tilde \vy_{t, \mathrm{train}} = \tilde A_{\mathrm{train}}(\vx_0 + \sigma_t\veta)$ using a matrix $\tilde A_{\mathrm{train}}$ (that is a perturbation of the given matrix $A_{\mathrm{train}}$) and train with the following objective:
\begin{gather}
    J(\theta) = \E_{\vx_0, \vx_t, A_{\mathrm{train}}, \tilde A_{\mathrm{train}}}\left[ \left|\left| A_{\mathrm{train}}\vh_{\theta}(\tilde \vy_{0, \mathrm{train}}, \tilde A_{\mathrm{train}}) - \vy_{t, \mathrm{train}} \right|\right| \right],
    \label{eq:ambient_obj}
\end{gather}
that provably learns $\E[\vx_0 | \tilde A_{\mathrm{train}}, \tilde \vy_{t, \mathrm{train}}]$ as long as the matrix $\E[A_{\mathrm{train}}^TA_{\mathrm{train}} | \tilde A_{\mathrm{train}}]$ is full-rank. In certain cases, it is possible to introduce \textit{minimal} additional corruption and satisfy this condition. For example, if $A_{\mathrm{train}}$ is a random inpainting matrix (i.e. $A_{ij} \sim \mathrm{Be}(1-p))$, then $\tilde A_{\mathrm{train}}$ can be formed by taking $A_{\mathrm{train}}$ and erasing additional pixels with any non-zero probability $\delta > 0$.

\paragraph{Multi-coil Magnetic Resonance Imaging.} MRI is a prototypical use case for a framework that can learn generative models from linearly corrupted data, as in many cases it is not feasible to collect a large training set of fully sampled data~\citep{tibrewala2023fastmri,desai2022skmtea,mridata_knees}. In settings such as 3D+time dynamic contrast-enhanced MRI~\citep{zhang} it is impossible to collect fully sampled data due to the time-varying dynamics of the contrast agent~\citep{mridata_abdomens}.

In the multi-coil MRI setting, the acquisition involves collecting measurements of an image directly in the spatial frequency, known as $k$-space, from a set of spatially localized coils. Mathematically, there are $N_{\mathrm{c}}$ coils, each of which gives measurements:
\begin{gather}
    \label{eq:coil_mri}
     \vz_{\vx,i} = P \mathcal F S_i\vx + \vw_i, \quad i \in [N_{\mathrm{c}}],
\end{gather}
where $\vx$ is the (complex-valued) image of interest, $S_i$ represents the coil-sensitivity profile of the $i^{th}$ coil, $\mathcal F$ is the Fourier transform, $P$ represents the Fourier subsampling operator and $\vw_i$ is complex-valued Gaussian i.i.d noise. We assume the noiseless case and we point the reader to~\citet{daras2023consistent,aali2023solving,kawar2023gsure} for approaches that handle the noisy case. 

For the discrete approximation of the continuous signal as an image $\vx$ $\in$ $C^{n}$, the composition of $P, \mathcal F, S_i$ can be written as a matrix $A_i \in C^{m \times n}$, where the number of measurements, $m$, depends on the subsampling operator $P$. It is common to denote with $R$ the ratio $\frac{n}{m}$, which is known as the acceleration factor. At inference time (i.e., for a new patient), we typically want to acquire data with a high acceleration factor because this reduces scan time and patient discomfort.

Due to the time required to collect $k$-space measurements, it is often not possible to acquire fully sampled $k$-space. Hence, training cannot rely on a fully sampled image to guide reconstruction quality as is done in the fully supervised setting~\citep{MoDL, VarNet}. If two independent measurements are available for the same underlying signal, restoration models can be trained without having access to uncorrupted data~\citep{noise2noise, gan2023self}. If only one measurement is available (as in our setting), some end-to-end techniques use a loss on the measurement domain by partitioning the training measurements for each sample into (1) measurements for reconstruction, and (2) measurements for applying the loss function~\citep{SSDU}. Other approaches leverage structure in the MRI acquisition to learn from limited-resolution data~\citep{shimronKband}. More recently, works have begun leveraging signal set properties such as group invariance to assist in learning from subsampled measurements for a variety of inverse problems~\citep{tachella2022sensing,scanvic2023selfsupervised,Chen_2022_CVPR,tachella2022unsupervised}. The idea of additional corruption (as we do in our work), is more related to the Noisier2Noise framework~\citep{moran2020noisier2noise} which has been initially developed for denoising. \citet{mri_no_data} extended this idea to the MRI setting to learn models for MRI restoration without access to clean data. All these approaches learn a restoration model without access to reference data. However, as they are inherently end-to-end methods, their performance on out-of-distribution tasks (e.g., due to different acquisition trajectories) is known to degrade~\citep{mri_paper,martinzach_stableMRI}. 

\subsection{Ambient Diffusion for MRI}
\label{sec:ambient_mri}
The MRI acquisition process results in linearly corrupted measurements and thus it should be possible to use the Ambient Diffusion framework to learn from corrupted observations. Yet, we identified three important changes that differ from the setting studied by~\citet{daras2023ambient}: i) the inpainting happens in the Fourier Domain, ii) the inpainting has structure, i.e. whole vertical lines in the spatial Fourier are either observed or not observed (instead of masking random pixels) and, iii) the image is measured from an array of spatially varying receive coils. In what follows, we show how to account for these factors and apply the Ambient Diffusion framework to MRI data.

First, as in Ambient Diffusion, we will corrupt further the given measurements, but in our case, the additional corruption will happen in the Fourier space. We create further corrupted measurements for each coil by decreasing the Fourier subsampling ratio, i.e. we create iterates:
\begin{gather}
    \tilde \vz_{\vx,i} = \tilde P \mathcal F S_i\vx.
    \label{eq:coil_further_corrupted_measurements}
\end{gather}
This is possible to do because we can just subsample the available data $ \vz_{\vx,i}$. To further corrupt the given measurements, we keep the inpainting structure that there is in the measurements, i.e. we delete (at random) 2-D lines in Fourier space (instead of pixel masking as in~\citet{daras2023ambient}). Due to the multiplicity of coils, we combine their measurements to form a crude estimator of $\vx$ by taking the adjoint of $\tilde A_{\mathrm{train}}$: $\tilde \vy_{\mathrm{train}} = \sum_{i} S_i^H \mathcal F^{-1}(\tilde \vz_{\vx, i})$. Notice that for a discrete signal $\vx \in \mathbb{C}^n$, all these operations are linear and hence $\tilde \vy$ can be written as:
\begin{gather}
    \tilde \vy_{\mathrm{train}} = \underbrace{\left(\sum_{i} S_i^H \mathcal F^{-1} \tilde P \mathcal F S_i\right)}_{\tilde A_{\mathrm{train}}}\vx.
    \label{eq:further_corrupted_mri}
\end{gather}
As in Ambient Diffusion, we will train the network to predict the corresponding signal before the additional corruption, i.e. our target will be:
\begin{gather}
   \vy_{\mathrm{train}} = \underbrace{\left(\sum_{i} S_i^H \mathcal F^{-1} P \mathcal F S_i\right)}_{A_{\mathrm{train}}}\vx.
   \label{eq:corrupted_mri}
\end{gather}
We are now ready to state our main Theorem.

\begin{theorem}[(Informal)]
    Let $\vy_{t, \mathrm{train}}, \tilde \vy_{t, \mathrm{train}}$ represent the noisy versions of $ \vy_{\mathrm{train}}, \tilde \vy_{\mathrm{train}}$ respectively, i.e.:
    \begin{gather}
        \begin{cases}
        \vy_{t, \mathrm{train}} = A_{\mathrm{train}}\left(\vx + \sigma_t \veta\right) \\ \tilde \vy_{t, \mathrm{train}} = \tilde A_{\mathrm{train}}\left(\vx + \sigma_t \veta\right)
        \end{cases}.
    \end{gather}
    Then, the minimizer of the objective:
    \begin{gather}
        J(\theta) = \E_{\vy_{0, \mathrm{train}}, \tilde \vy_{t, \mathrm{train}}, A_{\mathrm{train}}, \tilde P}\left[ \left|\left| A_{\mathrm{train}}\vh_{\theta}(\tilde \vy_{t, \mathrm{train}}, \tilde P) - \vy_{0, \mathrm{train}} \right|\right|^2 \right],
        \label{eq:ambient_mri_obj}
    \end{gather}
    is: $\vh_{\theta}(\tilde \vy_{t, \mathrm{train}}, \tilde P) = \E[\vx_0 | \tilde \vy_{t, \mathrm{train}}, \tilde P]$.
    \label{th:main_theorem}
\end{theorem}

\paragraph{Proof overview.} The formal proof of this Theorem is given in the Appendix. We provide a sketch of the proof here. The techniques are based on Theorem 4.2 of Ambient Diffusion~\citep{daras2023ambient}. In Theorem 4.2, the condition that needs to be satisfied is that
$\E[A_{\mathrm{train}}^T A_{\mathrm{train}}| \tilde A_{\mathrm{train}}]$ is full-rank. Our objective is slightly different because the network doesn't see the whole forward operator $\tilde A_{\mathrm{train}}$, but only part of it, i.e. the matrix $\tilde P$. By following the steps of the Ambient Diffusion proof, we arrive at the necessary condition for our case, which is to prove that $\E[A_{\mathrm{train}} | \tilde P]$ is full-rank.

We start the argument by noting that $\E[P | \tilde P]$ is full-rank. Intuitively, this is because there is a non-zero probability of observing any Fourier coefficient and hence any deleted Fourier co-efficient could be due to the extra corruption introduced by $\tilde P$. Since $\mathcal F$ is an invertible matrix, then also $\E[\mathcal F^{-1}P\mathcal F |\tilde P ]$ is full-rank. Finally, we use the properties of the sensitivity masks to show that
$\E\left[\sum_{i}S_i^H\mathcal F^{-1} P \mathcal F S_i| \tilde P\right]$ is also full-rank and this completes the proof.

\subsection{Ambient Diffusion Posterior Sampling}

DPS requires access to $\E[\vx_0|\vx_t]$ to approximately sample from $p(\vx_0|\vy_{\mathrm{inf}})$. Since Ambient Diffusion models can only work with corrupted inputs, we propose the following update rule instead:
\begin{gather}
    \mathrm{d}\vx = -2\dot \sigma_t\sigma_t\left(\underbrace{\frac{\E[\vx_0 | \tilde \vy_{t, \mathrm{train}}, \tilde A_{\mathrm{train}}] - \vx_t}{\sigma_t}}_{\mathrm{Ambient \ Score}} + \gamma_t\nabla_{\vx_t}\log p(\vy_{\mathrm{inf}} | \vx_0=\E[\vx_0 | \tilde \vy_{t, \mathrm{train}}, \tilde A_{\mathrm{train}}])\right) \mathrm{d}t + g(t)\mathrm{d}\vw,
    \label{eq:ambient_dps_update}
\end{gather}
for a fixed $\tilde A_{\mathrm{train}} \sim p(\tilde A_{\mathrm{train}})$ \footnote{The Ambient MRI models take as input $\tilde P$ instead of $\tilde A$, as in Equation \ref{eq:ambient_mri_obj}.}. Comparing this to the DPS update rule (E.q. \ref{eq:dps_update}), all the $\E[\vx_0|\vx_t]$ terms have been replaced with their ambient counterparts, i.e. with $\E[\vx_0 | \tilde \vy_{t, \mathrm{train}}, \tilde A_{\mathrm{train}}]$. The latter is approximated by a neural network that is trained with Ambient Diffusion.

We term our approximate sampling algorithm for solving inverse problems with diffusion models learned from corrupted data \textbf{Ambient DPS (A-DPS)}.  We remark that similar to DPS, the proposed algorithm is an approximation to sampling from the true posterior distribution. Similarly to Theorem 1 of the DPS paper, for a given measurement $\vy_{\mathrm{inf}}$ at noise level $\sigma_{\vy}$ and a given matrix $A_{\mathrm{inf}}$, one can upper-bound the Jensen gap of A-DPS as follows:

\begin{gather}
    \mathcal J \leq \frac{n}{\sqrt{2\pi \sigma_{\vy}^2}} \exp(-1/2\sigma_{\vy}^2) ||  A_{\mathrm{inf}}||\int_{\vx_0} ||\vx_0 - \hat \vx_0 || p(\vx_0 | \vx_t) \mathrm{d}\vx_0,
\end{gather}

where $\hat \vx_0 = \mathbb E[\vx_0 | \tilde A_{\mathrm{train}}\vx_t, \tilde A_{\mathrm{train}}]$, for a fixed $\tilde A_{\mathrm{train}}$ sampled from $p(\tilde A_{\mathrm{train}})$. We omit the proof of this proposition since it follows the exact same steps as the proof of Theorem 1 in the DPS paper.

\subsection{Ambient Diffusion for in-domain reconstructions.} 
\label{sec:ambient_restoration}
Ambient DPS can be used to solve \textit{any} inverse problem for which the forward operator is known, using a diffusion model trained on linearly corrupted data of some form. It is important to underline though that if the inverse problem that we want to solve at inference time has the same forward operator as the one that was used for the training measurements, then we can use the Ambient Diffusion as a supervised restoration model. This is because Ambient Diffusion models are trained to estimate $\E[\vx_0 | \tilde \vy_{\mathrm{train}}, \tilde A_{\mathrm{train}}]$, and hence if $A_{\mathrm{inf}}$ comes from the same distribution as $A_{\mathrm{train}}$, then, the one-step prediction of the model, Ambient One Step (A-OS), is the Mean Squared Error (MSE) minimizer. Similarly, if we are interested in unconditional generation, we can simply run Ambient DPS without the likelihood term.

\begin{table}[!ht]
\centering
\caption{Unconditional and Conditional Sampling FID for models trained on MRI data at different acceleration factors.}
\label{table:uncond_FID}

\begin{minipage}{\textwidth}
\centering
\scriptsize{
\begin{tabular}{c|c|c}\toprule
   \textbf{Training Data}
   & \textbf{Training Method}
   & \textbf{Unconditional FID $\downarrow$} \\
    \midrule
    \multirow{2}{*}{$R=1$} & EDM &  $\textbf{10.41}$ \\
    & NCSNV2 &  $13.20$ \\
    \midrule
    \multirow{2}{*}{$R=2$} & L1-EDM &  $\textbf{18.55}$ \\
    & Ambient Diffusion & $30.34$ \\
    \midrule
    \multirow{2}{*}{$R=4$} & L1-EDM & $\textbf{27.64}$ \\
    & Ambient Diffusion & $32.31$ \\
    \midrule
    \multirow{2}{*}{$R=6$} & L1-EDM & $51.43$ \\
    & Ambient Diffusion & $\textbf{31.50}$ \\
    \midrule
    \multirow{2}{*}{$R=8$} & L1-EDM & $102.98$ \\
    & Ambient Diffusion & $\textbf{48.15}$ \\
    \bottomrule
\end{tabular}
}
\end{minipage}

\vspace{0.5cm}

\begin{minipage}{\textwidth}
\centering
\scriptsize{
\begin{tabular}{c|c|c|c}\toprule
   \textbf{Training Data} & \textbf{Inference Data}
   & \textbf{Reconstruction Method}
   & \textbf{Conditional FID $\downarrow$} \\
    \midrule
    $R=1$ & $R=8$ & FS-DPS & $4.16$ \\
    \midrule
    $R=2$ & \multirow{4}{*}{$R=8$} & \multirow{4}{*}{A-DPS (\textit{our method})} & $\textbf{0.71}$ \\
    $R=4$ & & & $0.87$ \\
    $R=6$ & & & $1.29$ \\
    $R=8$ & & & $2.00$ \\
    \bottomrule
\end{tabular}
}
\end{minipage}
\end{table}

\section{Experiments} 
\label{sec:experiments}

\subsection{Ambient MRI Diffusion Models}

In this section, we extend Ambient Diffusion to the multi-coil Fourier subsampled MRI setting (as detailed in Section \ref{sec:ambient_mri}) and we train our own MRI models from scratch. We give details about our dataset preparation in Section \ref{sec:dataset_prep} in the Appendix.

\paragraph{Sampling masks.} We retrospectively subsample the $k$-space training data by applying randomly subsampled masks that correspond to acceleration factors $R \in \{2, 4, 6, 8\}$. The sampling masks include fully sampled vertical (readout) lines corresponding to the observed Fourier coefficients (phase encodes). We always sample the central $20$ lines for autocalibration. To form further corrupted measurements, we randomly remove additional lines such that we create measurements at acceleration factor $R+1$. For example, we take data at $R=2$, we create further corrupted measurements at $R=3$ and we train the model to predict the clean image by measuring its error with the available data at $R=2$ given its prediction for the $R=3$ corrupted input.

\paragraph{Comparison Models.} We train one model for each acceleration factor $R \in \{1, 2, 4, 6, 8\}$. The $R=1$ model is trained on clean data (no extra corruption) based on the EDM approach~\citep{karras2022elucidating}. The $R>1$ models are trained with our modified Ambient Diffusion framework. While we focus on EDM for fully sampled training, we also train an NCSNV2 model for $R=1$ following the approach in~\citet{mri_paper}. As baselines, we also train EDM models after L1-Wavelet compressed sensing reconstruction (L1-EDM) of the training set at each acceleration factor~\citep{Miki2007}. 

\paragraph{Unconditional generation evaluation.} We evaluate the unconditional generation performance of each model and show prior samples for Ambient Diffusion and EDM models at $R=1$ in Figure \ref{fig:gens}. Similar to the results in~\citet{daras2023ambient}, we observe that the Ambient Diffusion samples are qualitatively similar to EDM models at low acceleration, and become slightly blurry at higher accelerations. Notably, the samples generated by Ambient Diffusion at $R=8$ have no residual aliasing artifacts, in contrast to the samples generated from L1-Wavelet reconstructions.

To quantify unconditional sample quality, we follow the approach in~\citet{bendelFID} and calculate FID scores from 100 samples using a pre-trained VGG network. Table~\ref{table:uncond_FID}, shows FID scores for the diffusion models. While there is an increase in FID for the Ambient models, we see that those trained at higher acceleration factors outperform L1-EDM models trained on L1-Wavelet reconstructions.

\begin{table}[!htp]
\centering
\includegraphics[width=1\textwidth]{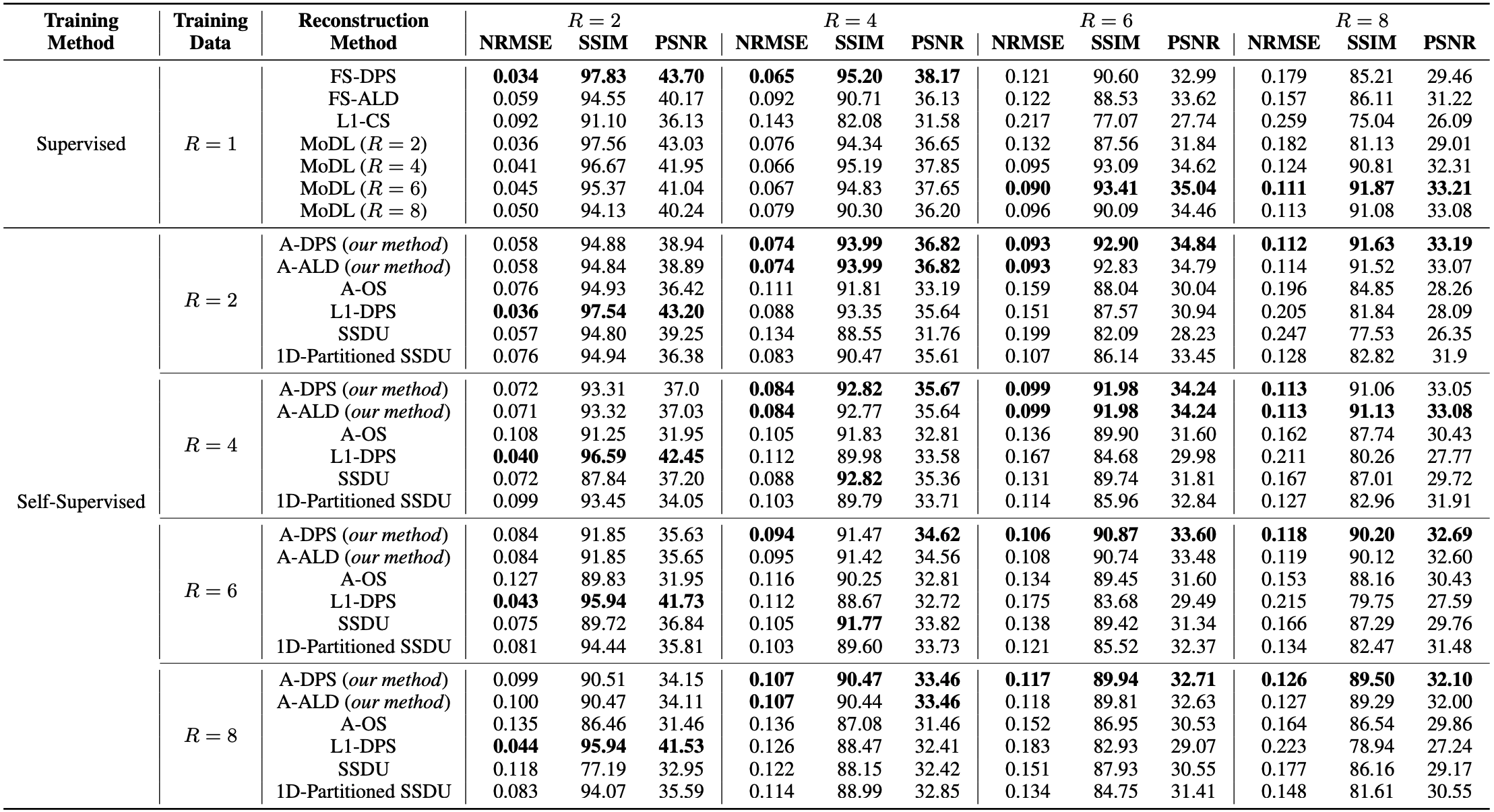}
\caption{Accelerated MRI reconstruction performance evaluation using Normalized Root Mean Squared Error (NRMSE $\downarrow$), Structural Similarity Index Measure (SSIM $\uparrow$), and Peak Signal-to-Noise Ratio (PSNR $\uparrow$) averaged across 100 test samples.}
\label{tab:mri-NRMSE}
\end{table}

\vspace{-0.5cm}

\subsection{Accelerated MRI Reconstruction}
We now evaluate the accelerated reconstruction performance of Ambient Diffusion models trained on subsampled MRI data. Our method, \textit{A-DPS} was implemented by following Eq. \ref{eq:ambient_dps_update} for $500$ steps with likelihood weighting $\gamma_t = \frac{1}{\norm{\vy-A\E[\vx_0 | \vy_{t, \mathrm{train}}, A_{\mathrm{train}}]}_2}$. We consider the following baselines:
\begin{compactitem}
    \item \textbf{Fully-Sampled Diffusion Posterior Sampling (FS-DPS)}. For models trained on clean data, we can use the Diffusion Posterior Sampling (DPS)~\citep{dps} algorithm. Specifically, we implemented the DPS algorithm with a Diffusion model trained with fully sampled ground truth images (FS-DPS). Inference was run using the update rule of Eq. \ref{eq:dps_update}, for a total of $500$ steps and with $\gamma_t = \frac{1}{\norm{\vy-A\E[\vx_0 | \vx_t]}_2}$.
    \item \textbf{Fully-Sampled Annealed Langevin Dynamics (FS-ALD)}. A popular method for solving inverse problems via score-based priors is ALD~\citep{mri_paper}. We train a score-based model using the same fully sampled $10,000$ samples as above and run inference for $1,300$ steps. This method acts as another fully sampled comparison to our technique.
    \item \textbf{L1-Wavelet Compressed Sensing (L1-CS)}. We use the same L1-Wavelet reconstruction described previously as a standalone non-deep learning comparison.
    \item \textbf{Model Based Deep Learning Supervised (MoDL-Sup)}. We use a supervised end-to-end model using the MoDL architecture in a supervised fashion~\citep{MoDL} for each acceleration factor. For inference, we pass the undersampled data through the trained MoDL network. Further details of this baseline are mentioned in Section \ref{sec:supervised_baselines} in the appendix.
    \item \textbf{Ambient Annealed Langevin Dynamics (A-ALD)}. We utilize our Ambient Diffusion MRI models with the inverse problem solver ALD~\citep{mri_paper} for $1,300$ steps.
    \item \textbf{Ambient One Step (A-OS)}. Our method also admits a one-step solution. This is outlined in section \ref{sec:ambient_restoration} by noting that we train our model to estimate $\E[\vx_0 | \tilde \vy_{\mathrm{train}}, \tilde A_{\mathrm{train}}]$. Thus we can use our models in a one-step fashion. The performance of the one-step prediction should only be expected to be good in-distribution, i.e. when the model is evaluated at the same acceleration factor as the one during training.

    \begin{figure}[hbt!]
    \centering
    \begin{subfigure}{0.45\textwidth}
        \centering
        \includegraphics[width=\textwidth]{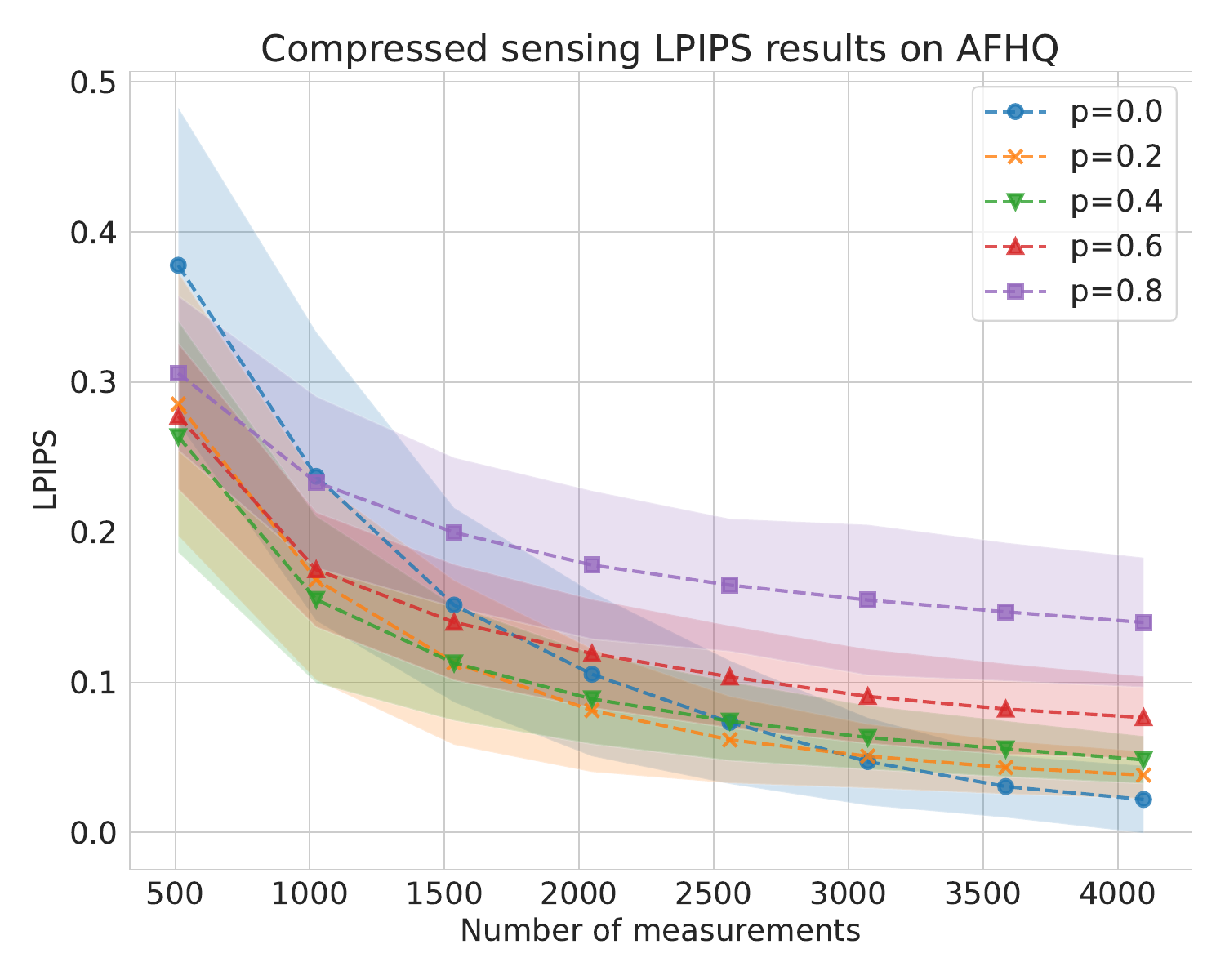}
        \caption{LPIPS per Number of Measurements.}
        \label{fig:lpips_afhq_compressed_sensing}
    \end{subfigure}
    \hfill
    \begin{subfigure}{0.45\textwidth}
        \centering
        \includegraphics[width=\textwidth]{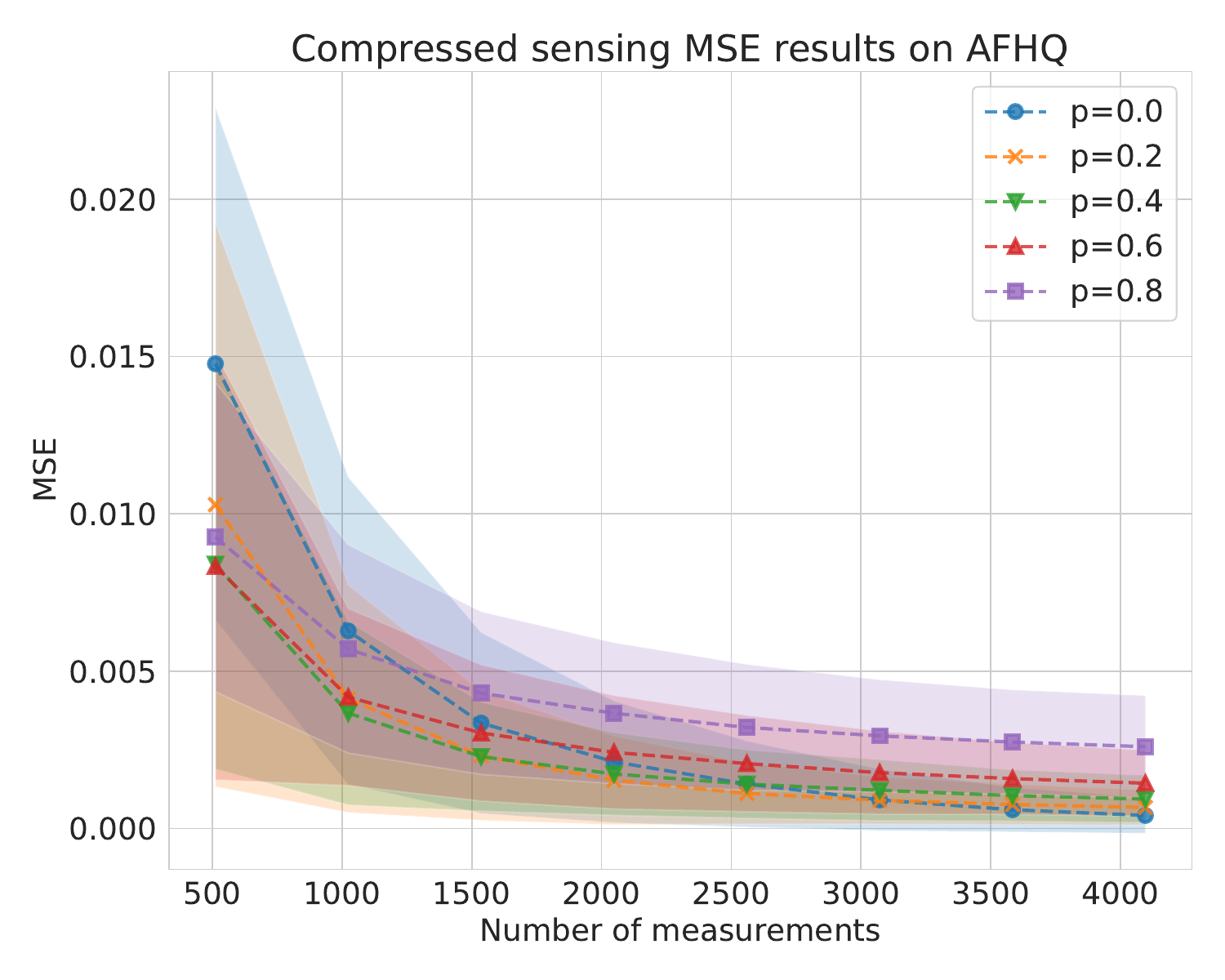}
        \caption{MSE per Number of Measurements.}
        \label{fig:mse_afhq_compressed_sensing}
    \end{subfigure}
    \caption{Compressed Sensing results, AFHQ: performance metric and standard deviation. 
    As shown, the model trained with clean data ($p=0.0$) only outperforms the models trained with corrupted data for more than $1000$ measurements, in both LPIPS and MSE. }
    \label{fig:afhq_compressed_sensing}
    \end{figure}
    
    \begin{figure}[hbt!]
        \centering
        \begin{subfigure}{0.45\textwidth}
            \centering
            \includegraphics[width=\textwidth]{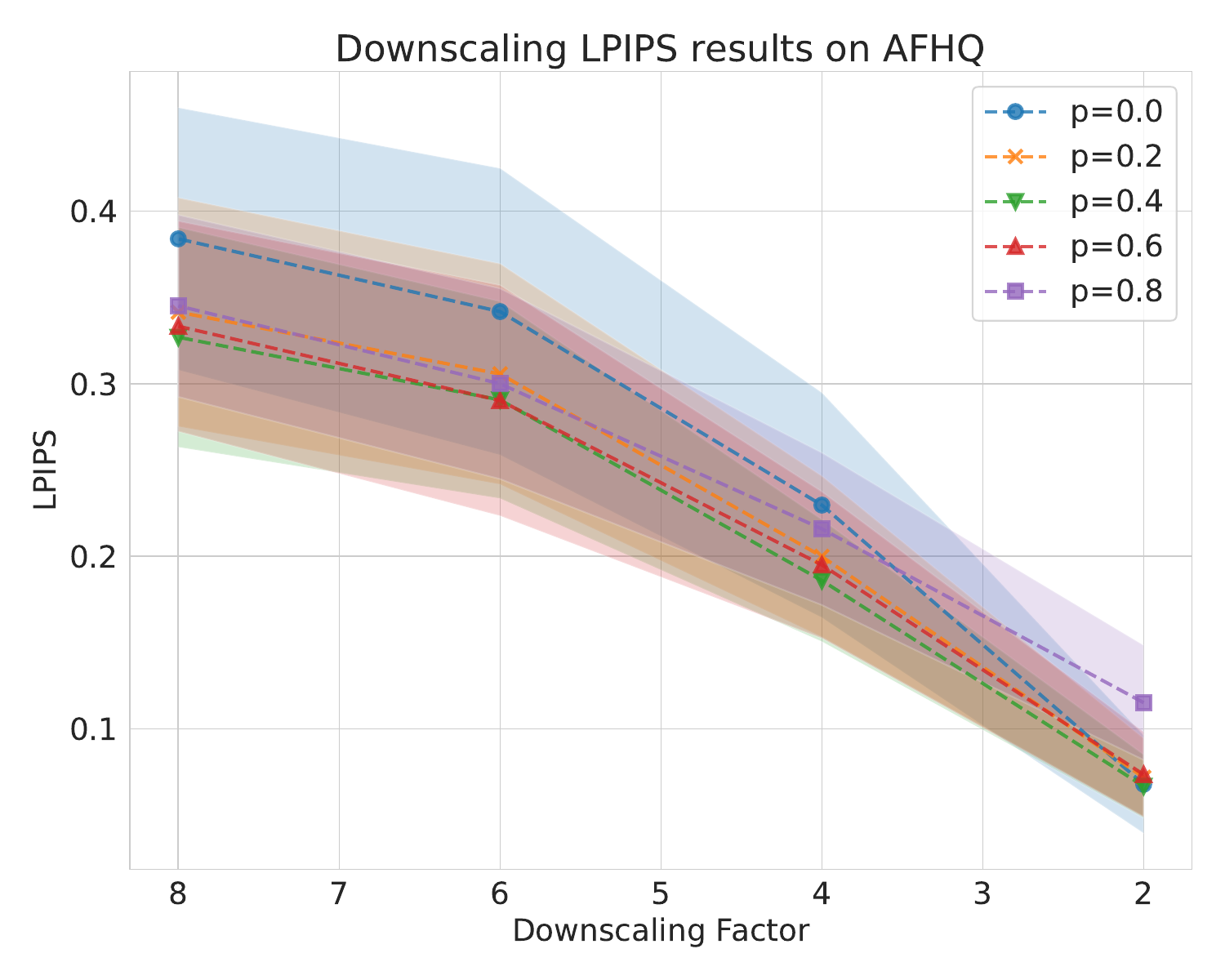}
            \caption{LPIPS per downsampling factor.}
            \label{fig:lpips_afhq_superresolution}
        \end{subfigure}
        \hfill
        \begin{subfigure}{0.45\textwidth}
            \centering
            \includegraphics[width=\textwidth]{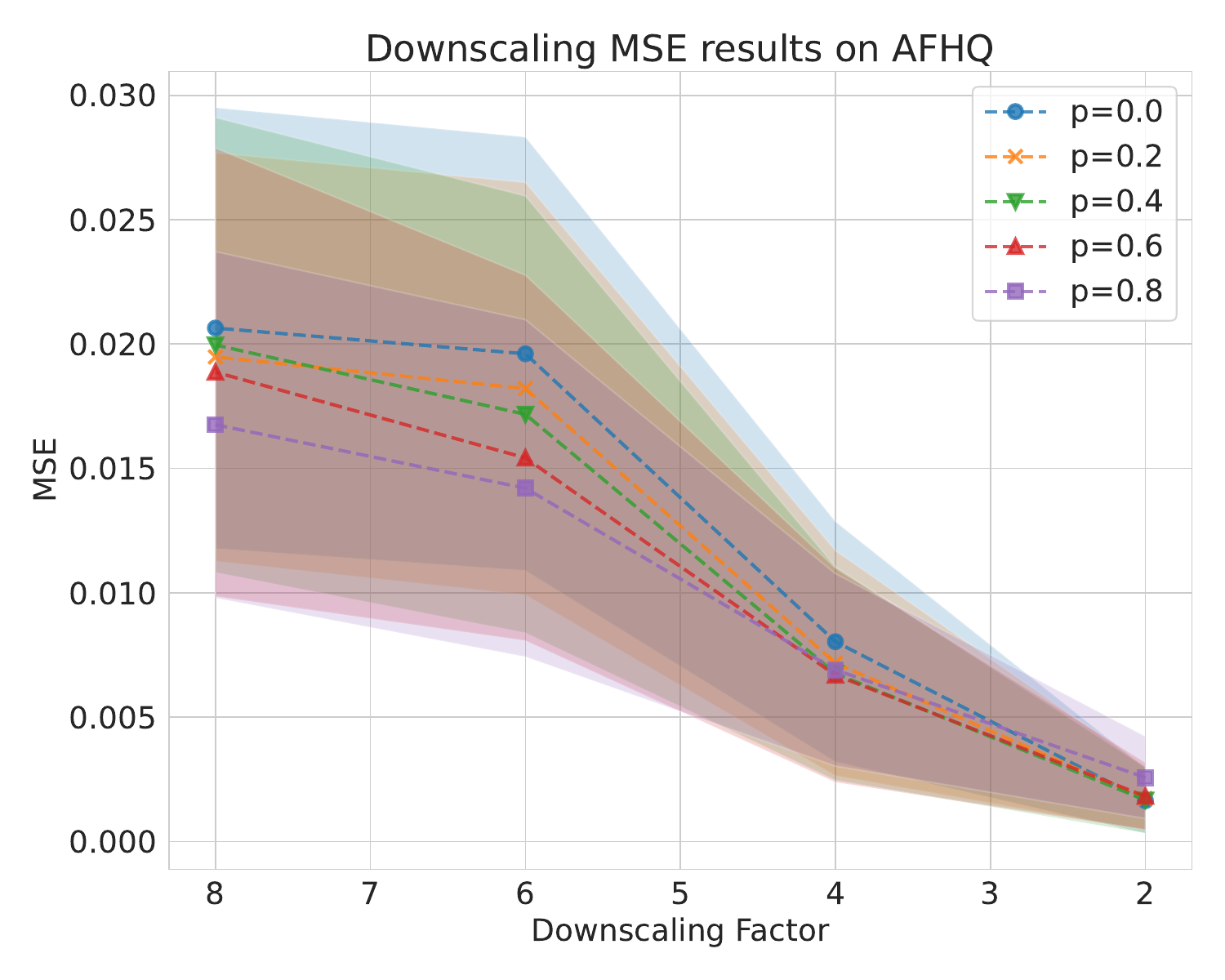}
            \caption{MSE per downsampling factor.}
            \label{fig:mse_afhq_superresolution}
        \end{subfigure}
        \caption{Super-resolution results, AFHQ: Performance metric and standard deviation. 
        The model trained with clean data ($p=0.0$) performs worse, except at downscaling factor 2. 
        }
        \label{fig:afhq_superresolution}
    \end{figure}
    
    \item \textbf{L1-Diffusion Posterior Sampling (L1-DPS)}. We utilize the diffusion models trained on the L1-Wavelet reconstructions of data from measurement sets at $R=2,4,6,8$. For reconstruction, we use DPS for a total of $500$ steps with $\gamma_t = \frac{1}{\norm{\vy-A\E[\vx_0 | \vx_t]}_2}$.
    \item \textbf{Self-Supervised learning via Data Under-sampling (SSDU)}. We train SSDU~\citep{SSDU} models using the end-to-end restoration network by splitting the available measurements ($\Omega$) of each sample $y^\Omega_i$ into training sets ($\Theta$) and loss sets ($\Lambda$) or measurements $y^\Theta_i$ and $y^\Lambda_i$. Further SSDU training details are given in Section \ref{sec:supervised_baselines}.
    \item \textbf{1D-Partitioned SSDU}. We train 1D-Partitioned SSDU~\citep{mri_no_data} which stems from the SSDU framework and introduces partitioning of the sampling set to ensure each subset has a distribution similar to the original mask. The loss function, network architecture, and training hyperparameters were set according to~\cite{mri_no_data}.
   
\end{compactitem}

\paragraph{Results} Quantitative metrics for the listed methods are included in Table \ref{tab:mri-NRMSE} under the NRMSE, SSIM and PSNR metrics respectively. Furthermore, we report feature-based metrics including LPIPS and DISTS in Table \ref{tab:mri-lpips-dists}. At low acceleration levels ($R=2,4$) FS-DPS outperforms most methods, including A-DPS. However, as the validation acceleration increases to higher ratios $R$, A-DPS outperforms other models, including those trained on fully sampled data. 

We also trained an Ambient Diffusion model on undersampled abdominal MRI scans, where no ground truth data is available~\citep{zhang2013coil}. We show visual examples from the prior and posterior sampling experiments in the Appendix (Figs \ref{fig:Ambient-prior-abdomen}, \ref{fig:A-DPS-abdomen}).

We visualize reconstructions for a subset of the methods listed at various accelerations in Fig. \ref{fig:posterior_recon}. Here, we see that FS-DPS outperforms A-DPS at lower accelerations ($R=4$). However, at higher acceleration ($R=8$) we can see that FS-DPS introduces significant artifacts into the reconstruction while our A-DPS reconstructions maintain a high degree of visual fidelity. We also report the distribution metrics (Conditional FID) between samples from the posterior and the ground-truth distribution for A-DPS and FS-DPS methods in Table \ref{table:uncond_FID}. Similar to our findings from posterior reconstruction metrics, we find that in the higher acceleration ($R=8$) regime, A-DPS outperforms FS-DPS. We also include graphical trends for NRMSE, SSIM, and PSNR for A-DPS, A-OS, FS-DPS, L1-DPS, FS-ALD, L1-CS, MoDL-Sup, and SSDU in the Appendix (Figs \ref{fig:A-DPS}, \ref{fig:A-OS}, \ref{fig:fs-baselines}, \ref{fig:MoDL-sup}, \ref{fig:SSDU}).

\subsection{Ambient Diffusion with Pre-trained Models on Datasets of Natural Images}
\label{sec:A-DPS-Natural}

We use the models from the Ambient Diffusion~\citep{daras2023ambient} that are trained on randomly inpainted data with different erasure probabilities. Specifically, for AFHQ we use the Ambient Models with erasure probability $p \in \{0.2, 0.4, 0.6, 0.8\}$ and for Celeb-A we use the pre-trained models with $p \in \{0.6, 0.8, 0.9\}$. We underline that all the Ambient Models have worse performance for unconditional generation than those trained with clean data (i.e. the models trained with $p=0.0$). This work aims to explore the conditional generation performance of Ambient Models, where the conditioning is in the measurements $\vy_{\mathrm{inf}}$,  and compare it with models trained on uncorrupted data. To ensure that Ambient Models do not have an unfair advantage, we test only on restoration tasks that differ from those encountered in their training. Specifically, we use models trained on random inpainting and we evaluate Gaussian Compressed Sensing~\citep{baraniuk2007compressive} and Super Resolution. 

\paragraph{Results.} In Figure \ref{fig:super_noisy} we show visual examples from our experiment on the noisy super-resolution task for natural images using a model: (1) trained with fully-sampled data (FS-DPS), (2) trained with randomly inpainted data at $p=0.6$ (A-DPS). Figure \ref{fig:afhq_compressed_sensing} presents Gaussian Compressed Sensing reconstruction results (i.e. reconstructing a signal from Gaussian random projections). We show MSE and LPIPS performance metrics for the AFHQ dataset across varying number of measurements. The results are given for models that are \textit{trained with inpainted} images at different levels of corruption, indicated by the erasure probability $p$. The model trained with clean data outperforms the models trained with corrupted data when the number of measurements is high. However, as we reduce the number of measurements, 
Ambient Models outperform the models trained with clean data in the very low measurements regime. To the best of our knowledge, there is no known theoretical argument that explains this performance cross-over, and understanding this further is an interesting research direction. Similar results are presented in Figure \ref{fig:afhq_superresolution} for the task of super-resolution at AFHQ. The model trained on clean data ($p=0$) slightly outperforms the Ambient Models in both LPIPS and MSE for reconstructing a $2\times$ downsampled image, as expected. Yet, as the resolution decreases, there is again a cross-over in performance and models trained on corrupted data start to outperform the models trained on uncorrupted data. We include results for LPIPS and MSE for Compressed Sensing and Downsampling in FFHQ and Celeb-A in the Appendix (Figs \ref{fig:ffhq_compressed_sensing}, \ref{fig:ffhq_downscaling}, \ref{fig:celeba_compressed_sensing}, \ref{fig:celeba_downscaling}).

We compute MSE in other inverse problem settings for FFHQ: Box Inpainting, Additive Gaussian Noise, and Super Resolution (with Gaussian Noise $\sigma=0.05$). We find that trends in this experiment are consistent with previous findings. The results across multiple downsampling factors and noise levels in the Appendix (Figs \ref{fig:ffhq_box_inpainting}, \ref{fig:ffhq_denoising}, \ref{fig:ffhq_super_res_noisy_setting}). Finally, we examine how the number of sampling steps affects the performance. The MSE results for Compressed Sensing with $4000$ measurements on AFHQ are shown in Figure \ref{fig:speed_afhq_mse}. As shown, the higher the erasure probability $p$ during training, the better the Compressed Sensing performance of the model for low Number of Function Evaluations (NFEs). Models trained with higher corruption are faster since they require fewer steps for similar performance. For increased NFEs, the models trained on clean(er) data outperform. This result is consistent across different datasets (AFHQ, FFHQ, CelebA), reconstruction tasks (Compressed Sensing, Downsampling), and metrics (MSE, LPIPS) (Figs \ref{fig:speed_afhq_lpips}, \ref{fig:speed_ffhq_mse}, \ref{fig:speed_ffhq_lpips}, \ref{fig:speed_celeba_mse}, \ref{fig:speed_celeba_lpips} in the Appendix).

\section{Conclusion} We present Ambient Diffusion Posterior Sampling (A-DPS), a simple framework based on DPS for solving inverse problems with Ambient Diffusion models. We show that Ambient Diffusion models trained on corrupted data can be better suited for handling ill-posed inverse problems under severe corruption. By being exposed to corrupted training data, A-DPS exhibits robust priors that generalize better to high-acceleration regimes, unlike FS-DPS, which may overfit clean data and fail to adapt effectively to severe undersampling. Our framework fully unlocks the potential of Ambient Diffusion models that are critical in applications where access to full data is impossible or undesirable. 

\pagebreak
\nocite{delbracio2023inversion}
\nocite{hu2023restoration}
\nocite{zheng2023fast}
\nocite{metzler2016denoising}
\nocite{noise2noise}
\nocite{noise2score}
\nocite{heckel2019deep}

\bibliography{iclr2025_conference}
\bibliographystyle{iclr2025_conference}

\appendix

\clearpage

\section{Theoretical Results}
\label{sec:theory_proof}
\subsection{Theory Proof}
In this section, we provide the proof of Theorem \ref{th:main_theorem} that was stated in the main paper. We begin by giving the formal statement of the theorem.

\begin{theorem}[Formal Statement of Theorem \ref{th:main_theorem}]
    Let $\vx \in \mathbb{C}^n$ be an unknown signal from a distribution that admits density $p(\vx)$. Let $\{S_i\}$ be a random set of diagonal matrices $\in \mathbb{C}^n$ that satisfies: $\sum_{i}S_i^HS_i=I_{n\times n}$. Assume sample access to linearly corrupted measurements of $\vx$ given by:
    \begin{gather}
        \vy_{\mathrm{train}} = \underbrace{\left(\sum_{i} S_i^H \mathcal F^{-1}P \mathcal F S_i\right)}_{A_{\mathrm{train}}}\vx,
    \end{gather}
    where $\mathcal F$ is the discrete Fourier matrix and $P$ is a random inpainting matrix, i.e. a diagonal matrix with either zeros or ones in the diagonal, such that $\Pr[P_{ii} =1] > 0$, for all entries $i$ of the diagonal. Define:
    \begin{gather}
        \tilde \vy_{\mathrm{train}} = \underbrace{\left(\sum_{i} S_i^H \mathcal F^{-1}\tilde P \mathcal F S_i\right)}_{\tilde A_{\mathrm{train}}}\vx,
    \end{gather}
    where $\tilde P$ is a further corrupted version of $P$ in the sense that with some non-zero probability $p$, a diagonal element that was $1$ in $P$ becomes $0$ in $\tilde P$.
    Define also $\vx_t, \vy_{t, \mathrm{train}}, \tilde \vy_{t, \mathrm{train}}$ the noisy versions of $\vx, \vy_{\mathrm{train}}, \tilde \vy_{\mathrm{train}}$ respectively, as in:
    \begin{gather}
        \vx_t = \vx + \sigma_t \veta, \quad \vy_{t, \mathrm{train}} = A_{\mathrm{train}}\left(\vx + \sigma_t \veta\right), \quad \tilde \vy_{t, \mathrm{train}} = \tilde A_{\mathrm{train}}\left(\vx + \sigma_t \veta\right).
    \end{gather}
    Then, the minimizer of the objective:
    \begin{gather}
        J(\theta) = \E_{\vy_{0, \mathrm{train}}, \tilde \vy_{t, \mathrm{train}}, A_{\mathrm{train}}, \tilde P}\left[ \left|\left| A_{\mathrm{train}}\vh_{\theta}(\tilde \vy_{t, \mathrm{train}}, \tilde P) - \vy_{0, \mathrm{train}} \right|\right|^2 \right],
    \end{gather}
    is: $\vh_{\theta}(\tilde \vy_{t, \mathrm{train}}, \tilde P) = \E[\vx_0 | \tilde \vy_{t, \mathrm{train}}, \tilde P]$.
    
    \label{th:main_theorem_formal_statement}
\end{theorem}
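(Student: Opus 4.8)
The plan is to follow the template of Theorem~4.2 of \citet{daras2023ambient}, with the twist that the network is conditioned only on the partial operator descriptor $\tilde P$ rather than on the full operator $\tilde A_{\mathrm{train}}$. Throughout, I treat the sensitivity maps $\{S_i\}$ as known (conditioned upon), so that $A_{\mathrm{train}}$ is a deterministic function of the base mask $P$; write $C := A_{\mathrm{train}}$ and $Z := (\tilde\vy_{t,\mathrm{train}},\tilde P)$ for the pair the network sees. Since $\vh_\theta$ ranges over all measurable functions of $Z$, I would first condition on $Z$ and minimize pointwise: for each value of $Z$, the quantity $\E\big[\,\|C\vh - C\vx\|^2 \mid Z\,\big]$ is a quadratic in the fixed vector $\vh$ whose stationarity condition is the normal equation
\begin{gather}
    \E[\,C^H C \mid Z\,]\,\vh \;=\; \E[\,C^H C\,\vx \mid Z\,].
\end{gather}
Two things then need to be established: (i) $\vh = \E[\vx\mid Z]$ solves this equation, and (ii) the solution is unique, i.e.\ $\E[C^H C\mid Z]$ is positive definite.

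For (i), the crucial fact is the conditional independence $C\perp\vx \mid Z$. A priori, $P$ is independent of $(\vx,\veta)$; the further-corrupted mask $\tilde P$ is obtained from $P$ by extra erasures using fresh randomness; and, given $\tilde P$, the measurement $\tilde\vy_{t,\mathrm{train}} = \tilde A_{\mathrm{train}}(\vx+\sigma_t\veta)$ is a deterministic function of $(\vx,\veta)$ alone. Factoring the joint law then shows $p(P,\vx\mid Z)$ splits into a factor depending on $P$ and a factor depending on $\vx$, so $P$ — hence $C$ — is conditionally independent of $\vx$ given $Z$. Combined with $\vy_{0,\mathrm{train}}=C\vx$ and the tower rule, this gives $\E[C^H C\,\vx \mid Z] = \E[C^H C\mid Z]\,\E[\vx\mid Z]$, so $\vh=\E[\vx\mid Z]=\E[\vx_0\mid\tilde\vy_{t,\mathrm{train}},\tilde P]$ indeed satisfies the normal equation.

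For (ii), I reduce the claim to full-rankness of $\E[C\mid Z]$. Since $P$ is an orthogonal projection, $\mathcal{F}$ is unitary, and each $S_i^H\mathcal{F}^{-1}P\mathcal{F}S_i$ is Hermitian positive semidefinite, $C$ is Hermitian PSD, so $C^H C = C^2$ and $\vh^H\E[C^2\mid Z]\vh = \E[\|C\vh\|^2\mid Z]$; were this zero for some $\vh\neq 0$, then $C\vh=0$ a.s.\ given $Z$, forcing $\vh^H\E[C\mid Z]\vh=0$ and contradicting $\E[C\mid Z]\succ 0$. To see $\E[C\mid Z]\succ 0$: the same reasoning as in (i) gives $P\perp\tilde\vy_{t,\mathrm{train}}\mid\tilde P$, hence $\E[C\mid Z] = \sum_i S_i^H\mathcal{F}^{-1}\,\E[P\mid\tilde P]\,\mathcal{F}S_i$. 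Now $\E[P\mid\tilde P]$ is diagonal with strictly positive entries: an index equal to $1$ in $\tilde P$ equals $1$ in $P$ with probability one, while an index equal to $0$ in $\tilde P$ still retains positive posterior probability of being $1$ in $P$, because $\Pr[P_{ii}=1]>0$ and the extra erasure probability is positive. Hence $M:=\mathcal{F}^{-1}\E[P\mid\tilde P]\mathcal{F}\succ 0$, and for $\vh\neq 0$,
\begin{gather}
    \vh^H\!\Big(\textstyle\sum_i S_i^H M S_i\Big)\vh \;=\; \sum_i (S_i\vh)^H M\,(S_i\vh) \;\ge\; 0,
\end{gather}
with equality only if $S_i\vh=0$ for all $i$, which by the normalization $\sum_i S_i^H S_i = I$ forces $\|\vh\|^2 = \vh^H\big(\sum_i S_i^H S_i\big)\vh = 0$, i.e.\ $\vh=0$. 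Thus $\E[C\mid Z]\succ 0$, the inner quadratic is strictly convex, and its unique minimizer is $\E[\vx_0\mid\tilde\vy_{t,\mathrm{train}},\tilde P]$, which is the claim.

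\textbf{Main obstacle.} The delicate part is the conditional-independence step: making precise that conditioning on the extra mask $\tilde P$ together with the noisy further-corrupted measurement $\tilde\vy_{t,\mathrm{train}}$ (which itself depends on $\tilde P$ and on the clean signal) leaks no information about the base mask $P$ beyond what $\tilde P$ already carries. Stating this rigorously requires being explicit about the generative order — $(\vx,\veta)$ sampled independently of $P$, then $\tilde P$ formed from $P$, then $\tilde\vy_{t,\mathrm{train}}$ a function of $(\tilde P,\vx,\veta)$ — and about the status of the sensitivity maps (taken here as known); once that lemma is in place, the remainder is routine linear algebra.
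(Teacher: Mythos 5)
Your proposal is correct and follows essentially the same route as the paper's proof: kill the cross/bias term via the conditional independence of the mask from $\vx_0$ given $(\tilde\vy_{t,\mathrm{train}},\tilde P)$, reduce invertibility of $\E[A_{\mathrm{train}}^H A_{\mathrm{train}}\mid\tilde P]$ to full-rankness of $\E[A_{\mathrm{train}}\mid\tilde P]$, note $\E[P\mid\tilde P]$ is full-rank, conjugate by $\mathcal F$, and finish with $\sum_i S_i^H S_i=I$. If anything, your write-up is slightly more careful than the paper's at two points it states loosely — you justify the square-matrix reduction and the final step $\sum_i S_i^H M S_i\succ 0$ via the Hermitian-PSD structure (the paper's "any full-rank $C$" phrasing really needs positive definiteness), and you prove the positivity of $\E[P\mid\tilde P]$ directly instead of citing Corollary A.1 of Ambient Diffusion.
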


\begin{proof}
    We adapt the proof Theorem 4.1 in Ambient Diffusion~\citep{daras2023ambient}. To avoid notation clutter, we will be omitting the subscript ``train'', when necessary.
    
    Let $\vh_{\theta^*}(\tilde \vy_t, \tilde P) = \E[\vx_0 | \tilde \vy_t, \tilde P] + \vf(\tilde \vy_t, \tilde P)$ be the optimal solution. The value of the objective for the optimal solution becomes:
    \begin{gather}
        J(\theta^*) = \E_{\vy_0, \tilde \vy_t, A, \tilde P}\left[ \left|\left| A \E[\vx_0 | \tilde \vy_t, \tilde P] + A\vf(\tilde \vy_t, \tilde P) - \vy_0\right|\right| ^2\right] \\
        = \E_{\vy_0, \tilde \vy_t, A, \tilde P}\left[ \underbrace{\left|\left| A\E[\vx_0 | \tilde \vy_t, \tilde P] - \vy_0\right|\right|^2}_{\mathrm{irreducible \ error}} + \vf(\tilde \vy_t, \tilde P)^TA^TA\vf(\tilde \vy_t, \tilde P) -2 \left( \E[\vx_0 | \tilde \vy_t, \tilde P] - \vx_0\right)^T A^TA\vf(\tilde \vy_t, \tilde P)\right].
    \end{gather}

    We will now work with the last term.
    \begin{gather}
       \E_{\vx_0, \tilde \vy_t, A, \tilde P}\left[\left( \E[\vx_0 | \tilde \vy_t, \tilde P] - \vx_0\right)^T A^TA\vf(\tilde \vy_t, \tilde P)  \right] \\
       = \E_{\tilde \vy_t, A, \tilde P}\left[ \E_{\vx_0 | \tilde \vy_t, A, \tilde P}\left[\left( \E[\vx_0 | \tilde \vy_t, \tilde P] - \vx_0\right)^T A^TA\vf(\tilde \vy_t, \tilde P)\right]\right] \\
       = \E_{\tilde \vy_t, A, \tilde P}\left[\left(\E[\vx_0 | \tilde \vy_t, \tilde P]  - \E[\vx_0 | \tilde \vy_t, A, \tilde P]\right)^TA^TA\vf(\tilde \vy_t, \tilde P) \right] \\
       = 0,
    \end{gather}
    since $\E[\vx_0 | \tilde \vy_t, \tilde P] = \E[\vx_0 | \tilde \vy_t, A, \tilde P]$, i.e. the value of $\vx_0$ does not depend on $\tilde A$ given $\tilde P$ and $\tilde \vy_t$. 

    We will now work with the middle term. We have that:
    \begin{gather}
        \E_{\tilde \vy_t, A, \tilde P}\left[ \vf(\tilde \vy_t, \tilde P)^TA^TA\vf(\tilde \vy_t, \tilde P)\right] \\
        = \E_{\tilde \vy_t, \tilde P}\left[ \E_{A | \tilde \vy_t, \tilde P}\left[\vf(\tilde \vy_t, \tilde P)^TA^TA\vf(\tilde \vy_t, \tilde P)\right]\right] \\
        = \E_{\tilde \vy_t, \tilde P}\left[ \vf(\tilde \vy_t, \tilde P)^T \E[A^TA | \tilde P] \vf(\tilde \vy_t, \tilde P)\right].
    \end{gather}

    From the last equation, it is evident that if $\E[A^TA | \tilde P]$ is full-rank, and hence the minimizer is $\E[\vx_0 | \tilde \vy_t, \tilde P]$ as needed. Since our $A$ matrix is square, it suffices to show that $\E[A | \tilde P]$ is full-rank. By Corollary A.1. in Ambient Diffusion, we have that: $\E[P | \tilde P]$ is full-rank.     
    We will now show that $\mathcal F^{-1} \E[P | \tilde P] \mathcal F$
    is also full-rank. This can be easily proved with contradiction. Assume that $\mathcal F^{-1} \E[P | \tilde P]\mathcal F$ is not full-rank. Then, there exists a vector $\vw\neq 0$ such that:
    \begin{gather}
        \mathcal F^{-1} \E[P | \tilde P] \mathcal F \vw =0 \iff \\
    \E[P | \tilde P] \underbrace{\mathcal F\vw}_{\vz \neq \bm{0}} = 0\iff \\
     \E[P | \tilde P]\vz = 0,
    \end{gather}
    which is a contradiction, since $\vz \neq \bm{0}$ and $ \E[P | \tilde P]$ is full-rank.

    So far, we have established that $\mathcal F^{-1} \E[P |\tilde P] \mathcal F$ is full-rank.
    By linearity of expectation, we further have that:
    \begin{gather}
        \mathcal F^{-1} \E[P | \tilde P] \mathcal F = \E[\mathcal F^{-1} P\mathcal F |  \tilde P],
    \end{gather}
    and hence, the latter is also full-rank. Finally, since $\sum_{i}S_i^HS_i = I$, for any full-rank matrix $C$, we have that $\sum_{i}S_i^HCS_i$ is also full-rank. This is true for any given set $S_i$ and hence it is also true for the expectation over the sets $\{S_i\}$. Putting everything together, we have that:
    \begin{gather}
        \E_{\{S_i\}, P}\left[\sum_i S_i^H \mathcal F^{-1} \E[P | \tilde P] \mathcal F S_i\bigg | \tilde P\right] = \E[A | \tilde P],
    \end{gather}
    is full-rank and the proof is complete.
    
\end{proof}

\paragraph{Note on alternative Ambient Diffusion designs.} Our Ambient Diffusion MRI training approach can be extended to use other linear operators to aggregate the measurements from different coils (e.g. we could have used the pseudoinverse). Alternatively, we can also further condition on the sensitivity maps, or directly feed all the coil measurements as input to the network without aggregation. We opted for a simple and fast method (aggregation through the adjoint). However, these other approaches can lead to improved performance because they fully leverage coil information.



\section{Dataset Preparation}
\label{sec:dataset_prep}
We follow standard practices in the MRI literature. We download the FastMRI dataset, randomly select 2,000 T2-Weighted scans, and pick 5 center slices from each scan, giving us us a dataset of 10,000 T2-weighted brain scans ($k$-space measurements). We pass each multi-channel $k$-space sample through a noise pre-whitening filter to transform the noise into standard white Gaussian. Then, we normalize each $k$-space sample by the absolute value of $99$-th percentile of the root sum-of-squares reconstruction of the autocalibration region ($24x\times24$ pixels from the center). Given the fully sampled $k$-space samples, we estimate sensitivity maps using the ESPIRiT calibration method~\citep{espirit}. These measurements serve as a fully sampled reference dataset (i.e., $R=1$).

\section{Details for baselines.}
\label{sec:supervised_baselines}

\paragraph{L1 Methods.} We used the BART implementation~\citep{bart,bart_code} and searched for the best regularization parameter over the training set. Specifically, we used a regularization weighting of 0.001 with 100 iterations.

\paragraph{Self-Supervised learning via Data Under-sampling (SSDU)~\citep{SSDU} }In a supervised setting, end-to-end image restoration networks can be trained by passing measurements through a network like MoDL~\citep{MoDL} or VarNet~\citep{VarNet} and taking a loss directly on the output of the network with the ground truth image. In the self-supervised setting, however, this is not possible. SSDU, trains the end-to-end restoration network by splitting the available measurement set ($\Omega$) of each sample $y^\Omega_i$ into training sets ($\Theta$) and loss sets ($\Lambda$) or measurements $y^\Theta_i$ and $y^\Lambda_i$ respectively where $\Omega = \Theta \cup \Lambda$. Where the reconstruction network is given access to measurements in the training set to obtain an estimated image $\hat{\vx}^{\Theta}_i = \vh_\theta(y^\Theta_i)$ and the loss is then defined over the loss measurements as

\begin{equation}
    L(\vy^\Lambda_i,\hat{\vx}^{\Theta}_i) = \frac{\norm{\vy^\Lambda_i - A_\Lambda \hat{\vx}^{\Theta}_i}_1}{\norm{\vy^\Lambda_i}_1} + \frac{\norm{\vy^\Lambda_i - A_\Lambda \hat{\vx}^{\Theta}_i}_2}{\norm{\vy^\Lambda_i}_2}
\end{equation}

where $A_\Lambda$ is the forward operator with ones in the selection mask $P$ at only the measurement locations in the loss set and zeros elsewhere. In this work, we selected $\vh_\theta(\cdot)$ to have the MoDL architecture~\citep{MoDL}. We trained individual models at four different acceleration levels ($R=2,4,6,8$). Each model was trained for ten epochs using the same training set ($10,000$  slices) as previously described. We used uniform random sampling to separate the training and loss measurement groups for each sample and found that a split of $\rho = \frac{|\Lambda|}{|\Omega|} = 0.2$ provided the best performance and thus was used for reporting all subsequent metrics. 

\paragraph{MoDL Supervised (MoDL-Sup)} To provide an upper bound on performance for supervised end-to-end methods we also trained the MoDL architecture in a supervised fashion~\citep{MoDL}. Again, we trained individual models at four different acceleration levels ($R=2,4,6,8$). Each model was trained using the same $10,000$ slices as above. However, for these models, we used the normalized root mean squared error (NRMSE) as the loss function:
\begin{equation}
    L(\vx_i,\hat{\vx}_i) = \frac{\norm{\vx_i - \hat{\vx}_i}_2}{\norm{\vx_i}_2}
\end{equation}
where $\vx_i$ are the ground truth images from our dataset and $\hat{\vx}_i = \vh_\theta(\vy_i, A_i)$ are the reconstructions provided by our network based on under-sampled measurements $\vy_i$. 

\section{Hyperparameters for MRI models}
\label{sec:mri_training_hyperparams}
\paragraph{Training Hyperparameters.}
The EDM model trained on the fully-sampled ($R=1$) dataset, as well as the four EDM models trained on L1-Wavelet reconstructions at each acceleration factor $R \in \{2, 4, 6, 8\}$ were all trained with 65 million parameters. The Ambient Diffusion models on the other hand were trained with 36 million parameters, for faster training. All the Ambient Diffusion models were trained for 250 epochs. For the Ambient Diffusion training experiments, the further corrupted sampling masks are given as an input to the model by concatenating with the measurements along the channel dimension, as in the original Ambient Diffusion~\citep{daras2023ambient} paper.

\paragraph{Sampling Hyperparameters.} The only tunable parameters for DPS (Eq. \ref{eq:dps_update}) and Ambient DPS (Eq. \ref{eq:ambient_dps_update}) are in the scheduling of the magnitude of the measurements likelihood term. In all the experiments in the DPS paper, this term is kept constant throughout the diffusion sampling trajectory and the authors recommend selecting a value in the range between $[0.1, 10]$. We follow this recommendation and we keep this term constant. The value of the step size for each model is selected with a hyperparameter search in the recommended range. For all our experiments, we follow exactly the DPS implementation provided in the official code repository of the paper. The other parameter that impacts performance is the number of steps we are going to run each algorithm for, i.e. the discretization level of the SDEs of Equations \ref{eq:dps_update}, \ref{eq:ambient_dps_update}. Typically, the higher the number of steps the better the performance since the discretization error decreases~\citep{chen2022sampling, chen2023restoration}. For the performance results, we run each method for several steps $\in \{50, 100, 150, 200, 250, 300\}$. 

\section{Additional MRI Results}

\begin{table}[!ht]
\centering
\caption{Accelerated MRI reconstruction performance evaluation using LPIPS $\downarrow$ and DISTS $\downarrow$, averaged across 100 test samples.}
\label{tab:mri-lpips-dists}
\scriptsize{
\begin{tabular}{c|c|c|cc|cc|cc|cc}
\toprule
\textbf{Training} & \textbf{Training} & \textbf{Reconstruction} & \multicolumn{2}{c}{$R=2$} & \multicolumn{2}{c}{$R=4$} & \multicolumn{2}{c}{$R=6$} & \multicolumn{2}{c}{$R=8$} \\
\textbf{Method} & \textbf{Data} & \textbf{Method} & \textbf{LPIPS} & \textbf{DISTS} & \textbf{LPIPS} & \textbf{DISTS} & \textbf{LPIPS} & \textbf{DISTS} & \textbf{LPIPS} & \textbf{DISTS} \\
\midrule
\multirow{1}{*}{Supervised} 
& \multirow{1}{*}{$R=1$} 
& FS-DPS & 0.061 & 0.028 & 0.106 & 0.059 & 0.166 & 0.094 & 0.231 & 0.126 \\
\midrule
\multirow{16}{*}{Self-Supervised} 
& \multirow{4}{*}{$R=2$} 
& A-DPS & 0.118 & 0.062 & \textbf{0.127} & \textbf{0.068} & \textbf{0.138} & \textbf{0.076} & \textbf{0.150} & \textbf{0.082} \\
& & A-ALD & 0.119 & 0.063 & \textbf{0.127} & \textbf{0.068} & 0.139 & 0.077 & 0.151 & 0.083 \\
& & L1-DPS & \textbf{0.067} & \textbf{0.032} & 0.131 & 0.075 & 0.201 & 0.115 & 0.262 & 0.143 \\
& & 1D-Partitioned SSDU & 0.141 & 0.084 & 0.144 & 0.081 & 0.187 & 0.108 & 0.223 & 0.129 \\

\cmidrule{2-11}
& \multirow{4}{*}{$R=4$} 
& A-DPS & 0.155 & 0.090 & \textbf{0.155} & \textbf{0.092} & \textbf{0.161} & \textbf{0.095} & \textbf{0.168} & \textbf{0.099} \\
& & A-ALD & 0.156 & 0.090 & \textbf{0.155} & \textbf{0.092} & \textbf{0.161} & \textbf{0.095} & \textbf{0.168} & \textbf{0.099} \\
& & L1-DPS & \textbf{0.096} & \textbf{0.051} & 0.193 & 0.112 & 0.258 & 0.147 & 0.301 & 0.167 \\
& & 1D-Partitioned SSDU & 0.169 & 0.105 & 0.169 & \textbf{0.092} & 0.187 & 0.104 & 0.214 & 0.123 \\

\cmidrule{2-11}
& \multirow{4}{*}{$R=6$} 
& A-DPS & 0.180 & 0.110 & 0.178 & 0.110 & \textbf{0.181} & \textbf{0.111} & \textbf{0.186} & \textbf{0.114} \\
& & A-ALD & 0.179 & 0.109 & 0.179 & 0.111 & \textbf{0.181} & \textbf{0.111} & \textbf{0.186} & \textbf{0.114} \\
& & L1-DPS & \textbf{0.112} & \textbf{0.061} & 0.221 & 0.128 & 0.282 & 0.161 & 0.318 & 0.179 \\
& & 1D-Partitioned SSDU & 0.143 & 0.092 & \textbf{0.169} & \textbf{0.102} & 0.196 & 0.117 & 0.222 & 0.133 \\

\cmidrule{2-11}
& \multirow{4}{*}{$R=8$} 
& A-DPS & 0.212 & 0.129 & 0.204 & 0.127 & \textbf{0.204} & \textbf{0.127} & \textbf{0.204} & \textbf{0.127} \\
& & A-ALD & 0.213 & 0.130 & 0.204 & 0.127 & \textbf{0.204} & \textbf{0.127} & \textbf{0.204} & \textbf{0.127} \\
& & L1-DPS & \textbf{0.114} & \textbf{0.064} & 0.227 & 0.134 & 0.291 & 0.168 & 0.327 & 0.187 \\
& & 1D-Partitioned SSDU & 0.142 & 0.094 & \textbf{0.176} & \textbf{0.109} & 0.206 & \textbf{0.127} & 0.231 & 0.140 \\

\bottomrule
\end{tabular}
}
\end{table}

\clearpage
\begin{figure}[hbt!]
\centering
\includegraphics[width=1\linewidth]{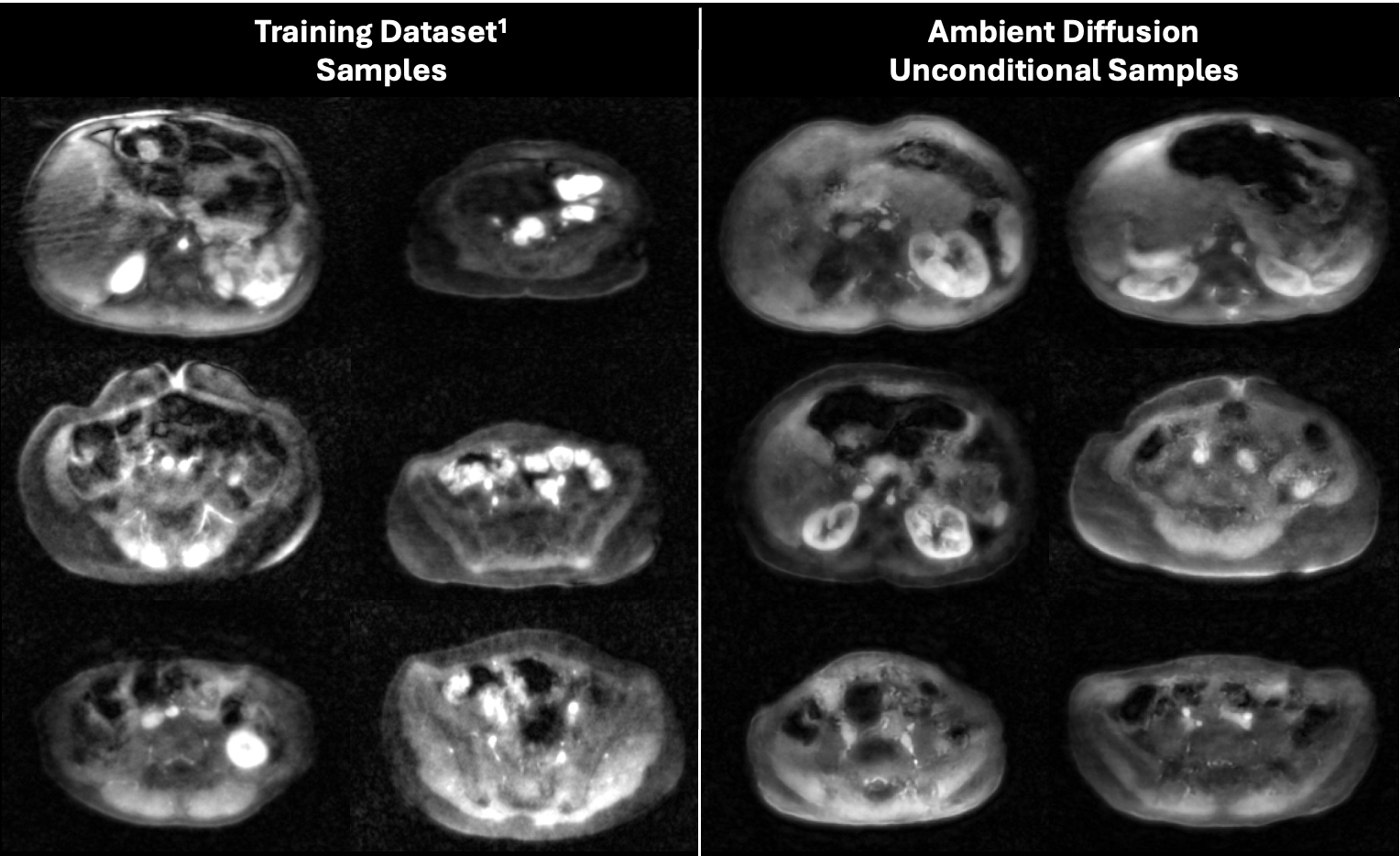}
\caption{Training samples (left) and prior samples (right) from ambient diffusion model trained on undersampled abdominal MRI scans, where no ground truth data is available.}
\label{fig:Ambient-prior-abdomen}
\end{figure}

\begin{figure}[hbt!]
\centering
\includegraphics[width=1\linewidth]{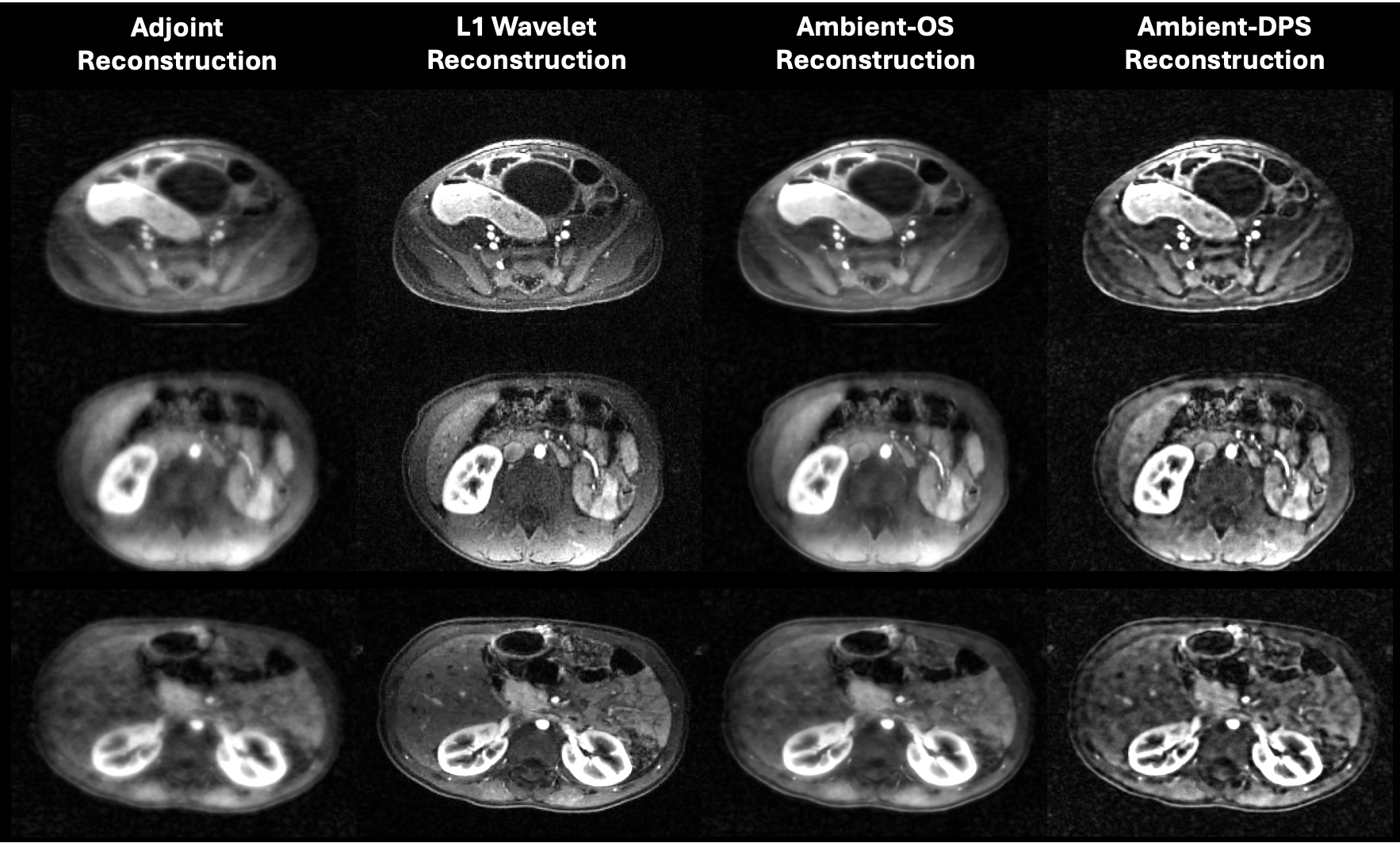}
\caption{Ambient diffusion posterior sampling (A-DPS) performance using ambient diffusion model trained on undersampled abdominal MRI scans, where no ground truth data is available.}
\label{fig:A-DPS-abdomen}
\end{figure}

\begin{figure}[hbt!]
\centering
\includegraphics[width=1.1\linewidth]{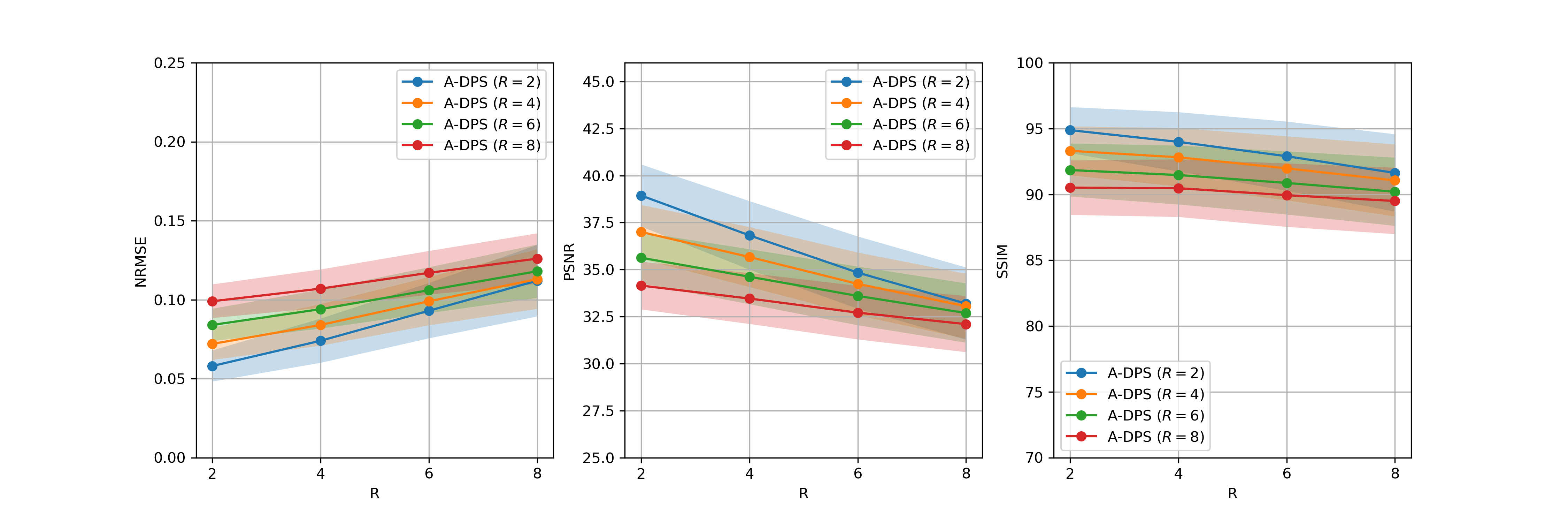}
\caption{Ambient diffusion posterior sampling (A-DPS) multi-step performance metrics at $R=2,4,6,8$ for models trained at $R=2,4,6,8$.}
\label{fig:A-DPS}
\end{figure}

\begin{figure}[hbt!]
\centering
\includegraphics[width=1.1\linewidth]{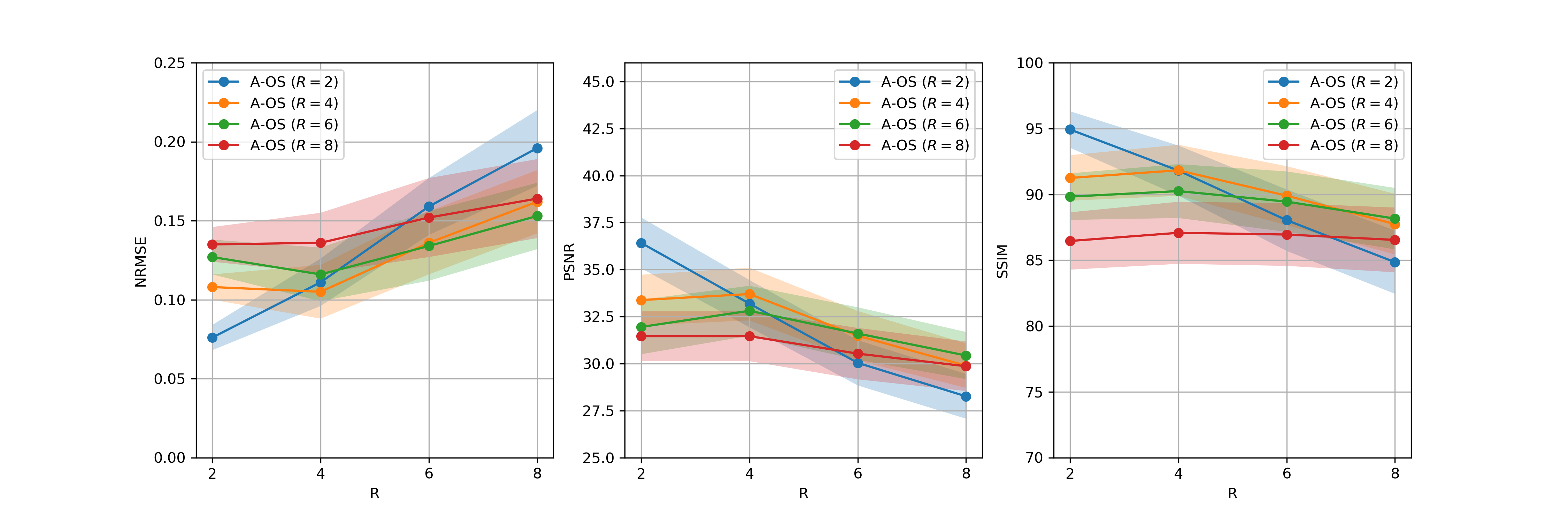}
\caption{Ambient diffusion one step (A-OS) performance metrics at $R=2,4,6,8$ for models trained at $R=2,4,6,8$.}
\label{fig:A-OS}
\end{figure}

\begin{figure}[hbt!]
\centering
\includegraphics[width=1.1\linewidth]{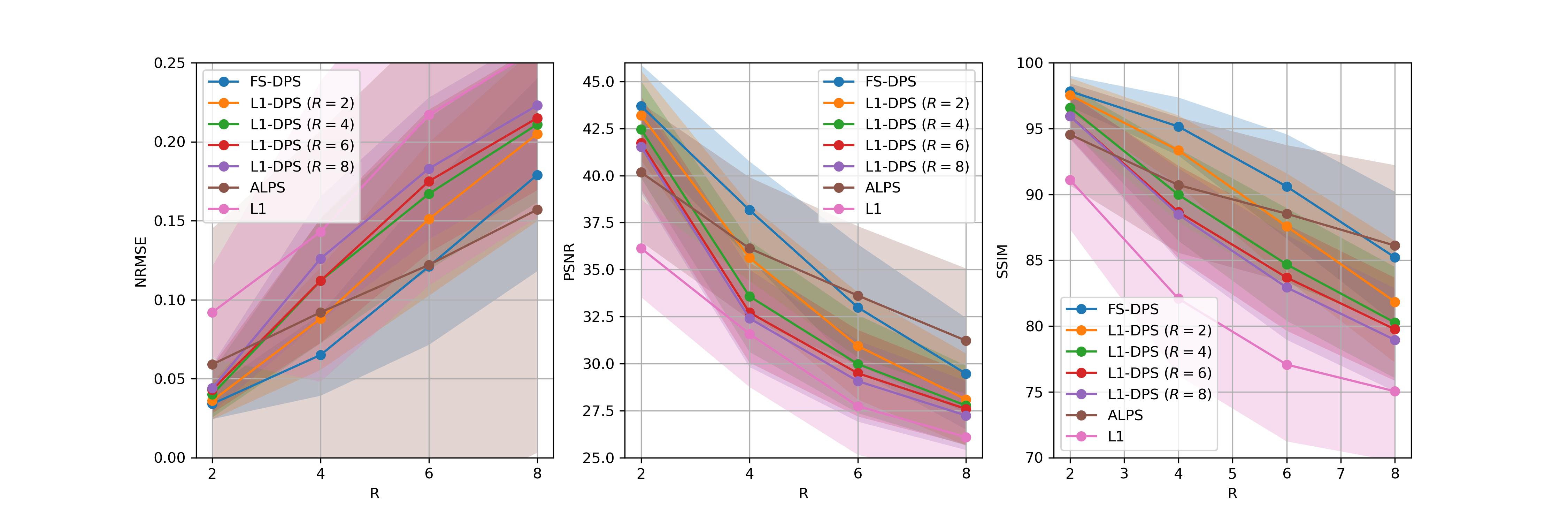}
\caption{Performance metrics for FS-DPS, L1-DPS, FS-ALD, and L1-CS at $R=2,4,6,8$}
\label{fig:fs-baselines}
\end{figure}

\begin{figure}[hbt!]
\centering
\includegraphics[width=1.1\linewidth]{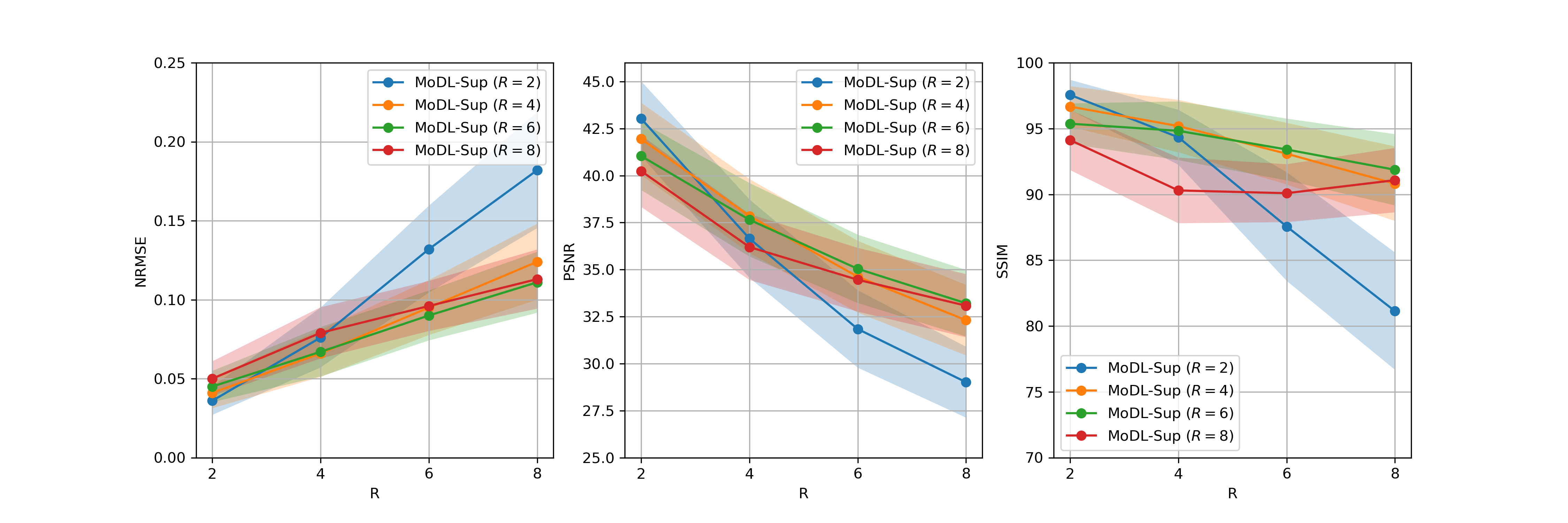}
\caption{Fully supervised MoDL performance metrics at $R=2,4,6,8$ for models trained at $R=2,4,6,8$.}
\label{fig:MoDL-sup}
\end{figure}

\begin{figure}[hbt!]
\centering
\includegraphics[width=1.1\linewidth]{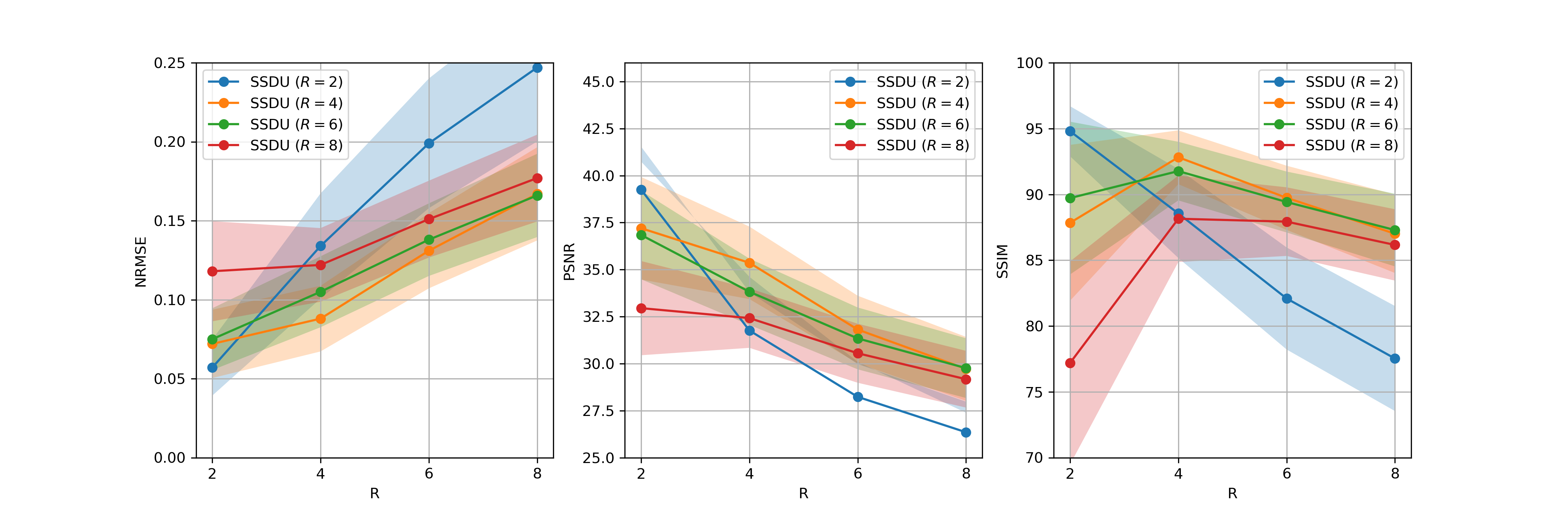}
\caption{SSDU performance metrics at $R=2,4,6,8$ for models trained at $R=2,4,6,8$.}
\label{fig:SSDU}
\end{figure}

\clearpage\section{Additional Natural Image Performance Results}

\begin{figure}[hbt!]
    \centering
    \begin{subfigure}{0.45\textwidth}
        \centering
        \includegraphics[width=\textwidth]{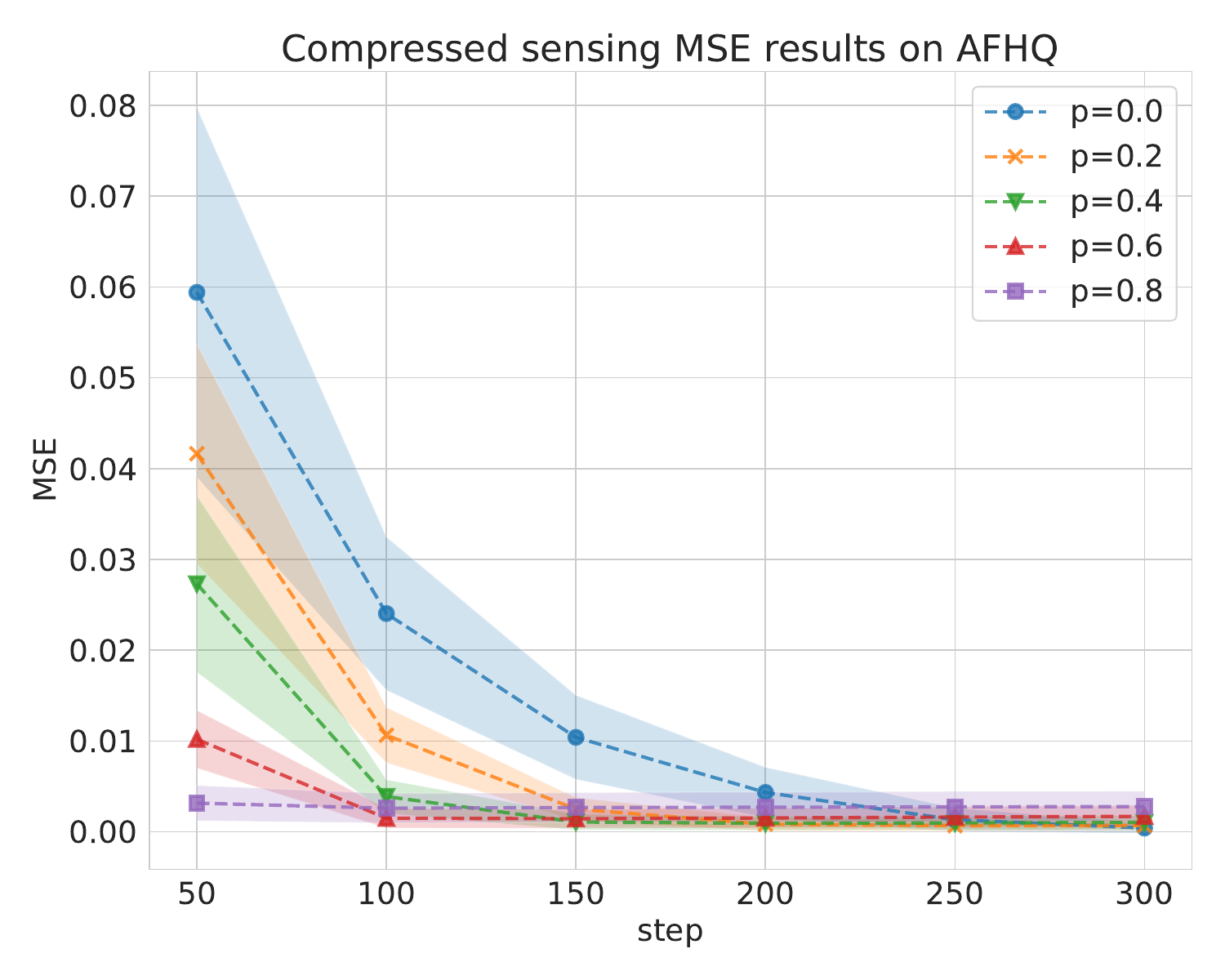}
        \caption{Compressed Sensing with $4000$ measurements per Number of Function Evaluations (NFEs).}
        \label{fig:speed_mse_afhq_compressed_sensing}
    \end{subfigure}
    \hfill
    \begin{subfigure}{0.45\textwidth}
        \centering
        \includegraphics[width=\textwidth]{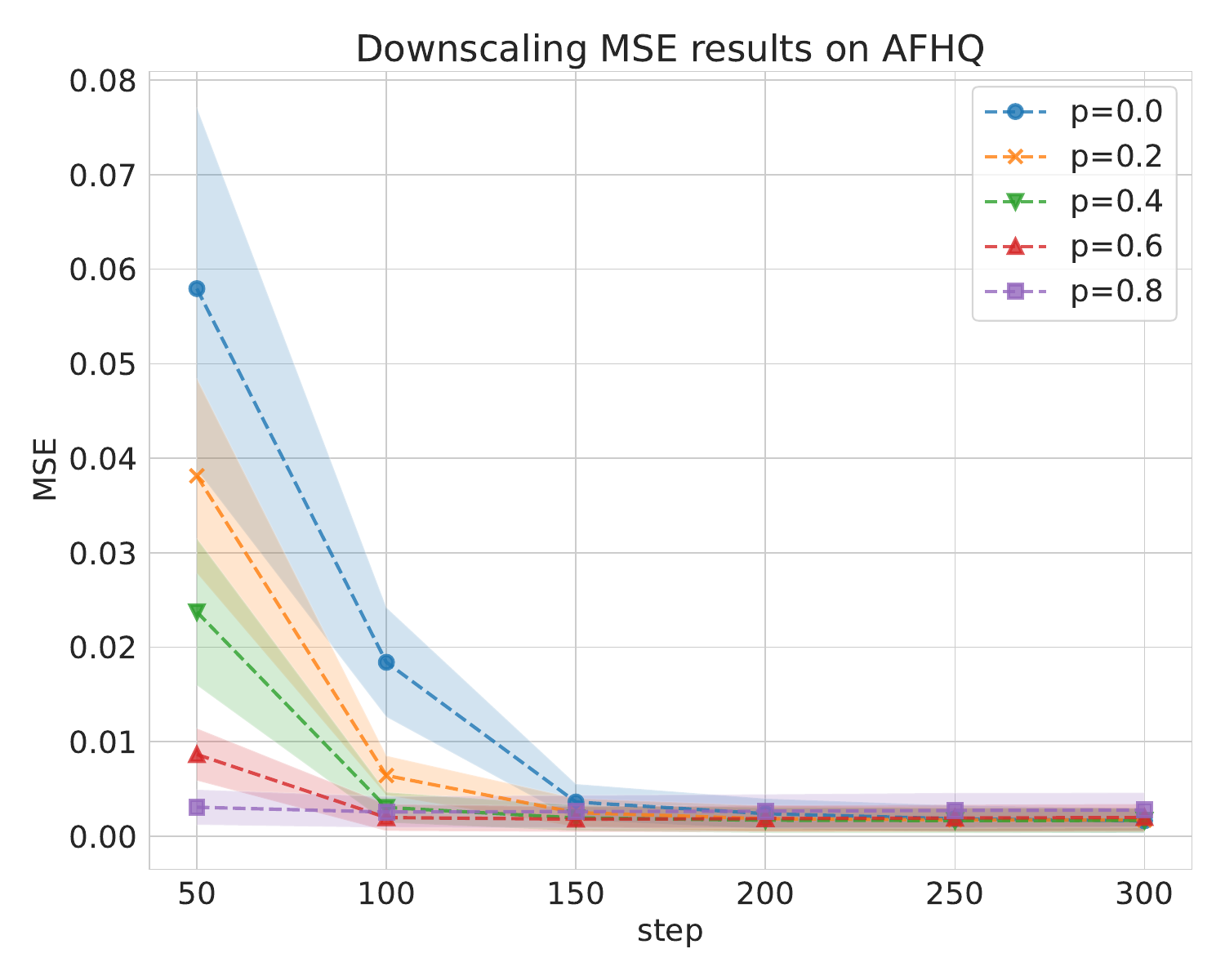}
        \caption{$2\times$ Super-Resolution per Number of Function Evaluations (NFEs).}
        \label{fig:speed_mse_afhq_downscaling}
    \end{subfigure}
    \caption{Speed performance plots for AFHQ.}
    \label{fig:speed_afhq_mse}
\end{figure}

\begin{figure}[!hp]
    \centering
    \begin{subfigure}{0.48\textwidth}
        \centering
        \includegraphics[width=\textwidth]{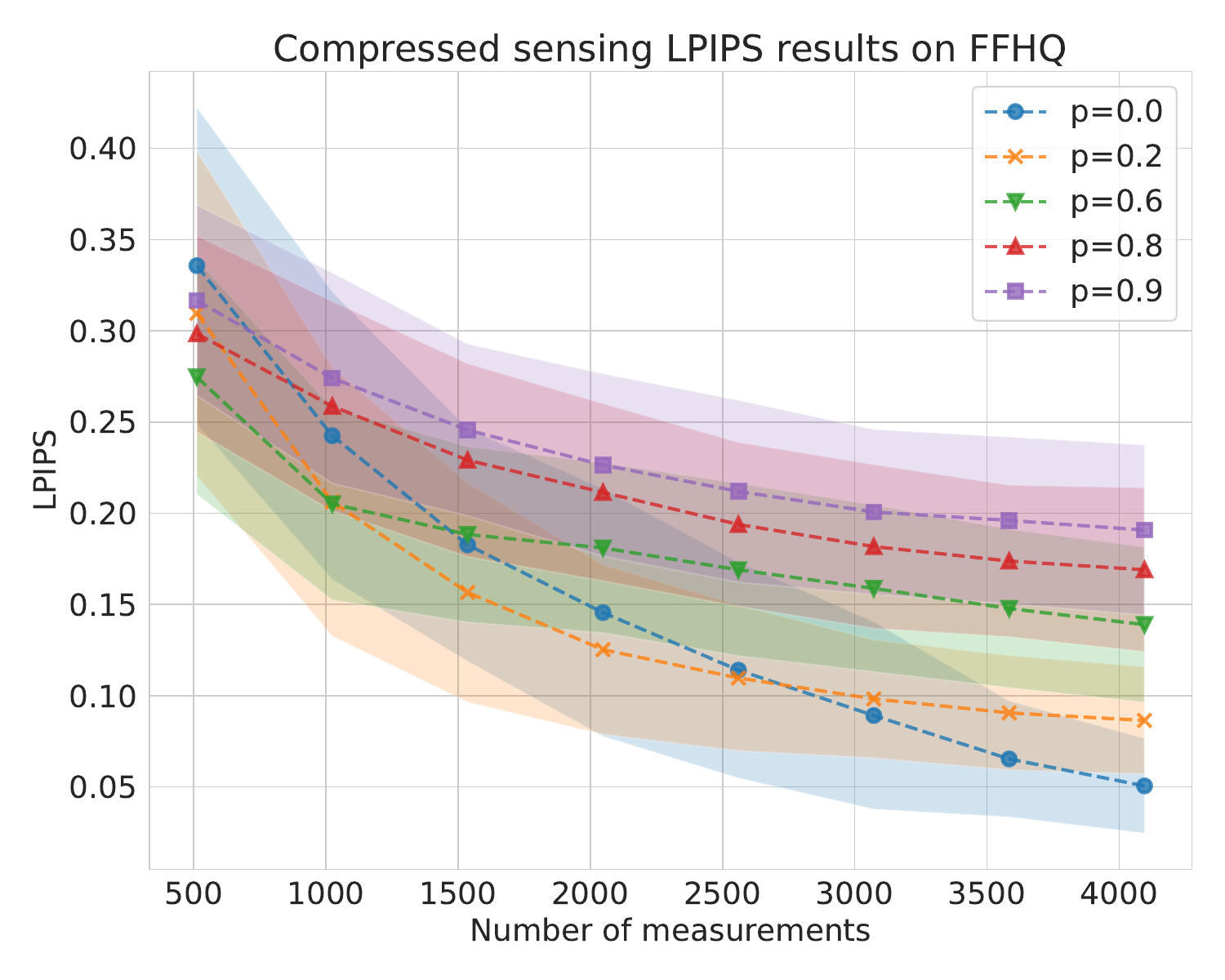}
        \caption{LPIPS per Number of Measurements.}
        \label{fig:lpips_ffhq_compressed_sensing}
    \end{subfigure}
    \hfill
    \begin{subfigure}{0.48\textwidth}
        \centering
        \includegraphics[width=\textwidth]{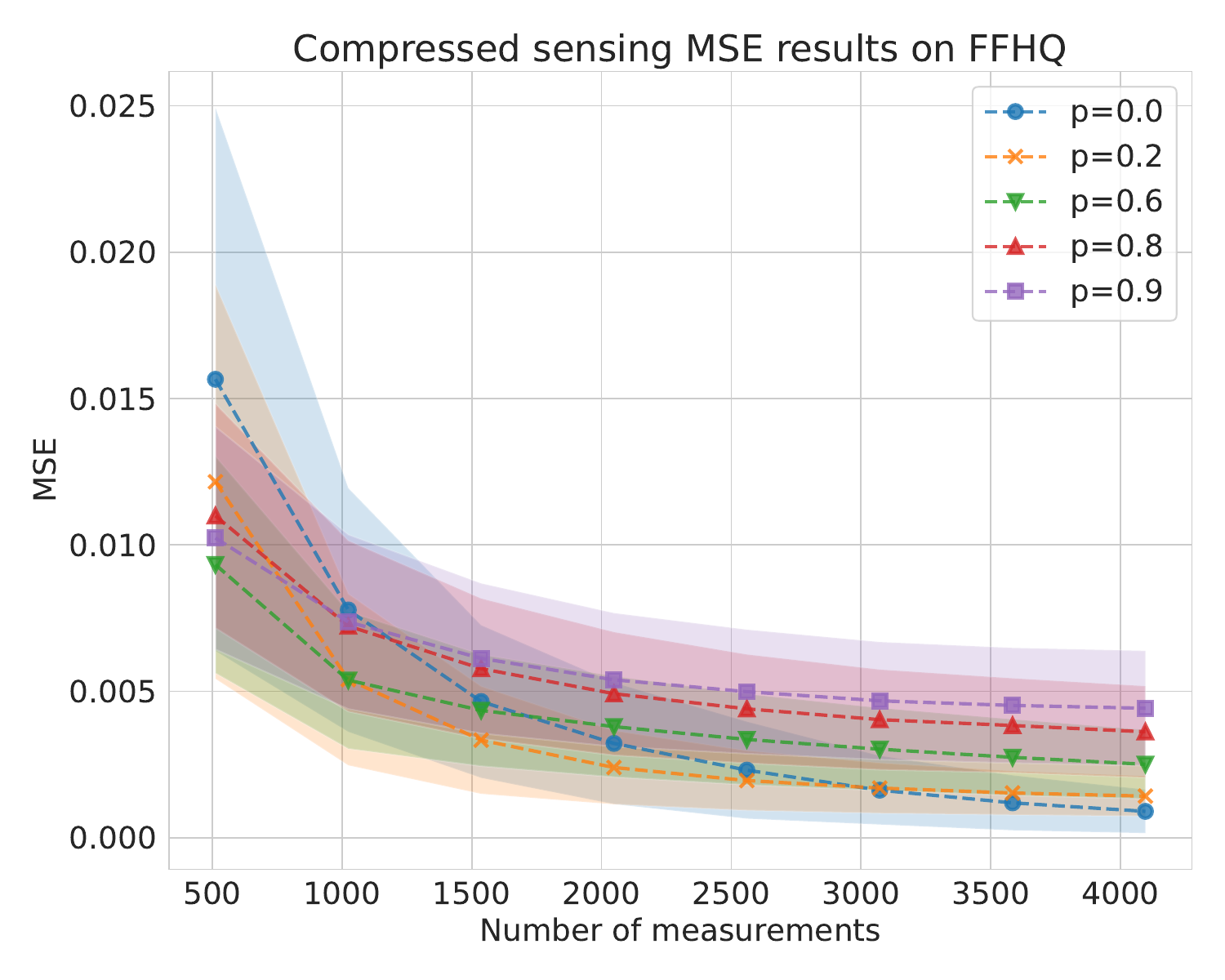}
        \caption{MSE per Number of Measurements.}
        \label{fig:mse_ffhq_compressed_sensing}
    \end{subfigure}
    \caption{Compressed Sensing Results for FFHQ.}
    \label{fig:ffhq_compressed_sensing}
\end{figure}

\begin{figure}[!htp]
    \centering
    \begin{subfigure}{0.48\textwidth}
        \centering
        \includegraphics[width=\textwidth]{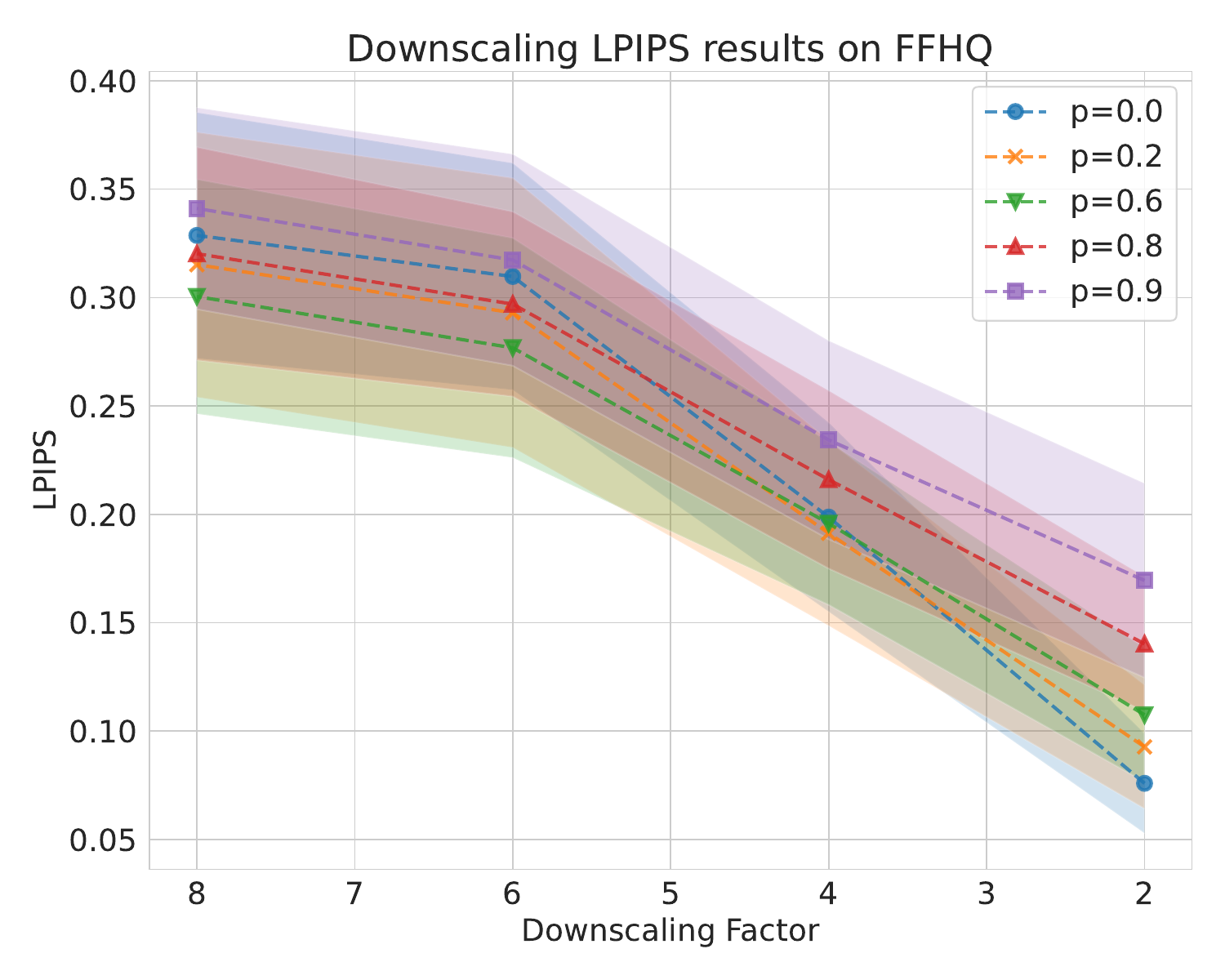}
        \caption{LPIPS per Downscaling Factor.}
        \label{fig:lpips_ffhq_downscaling}
    \end{subfigure}
    \hfill
    \begin{subfigure}{0.48\textwidth}
        \centering
        \includegraphics[width=\textwidth]{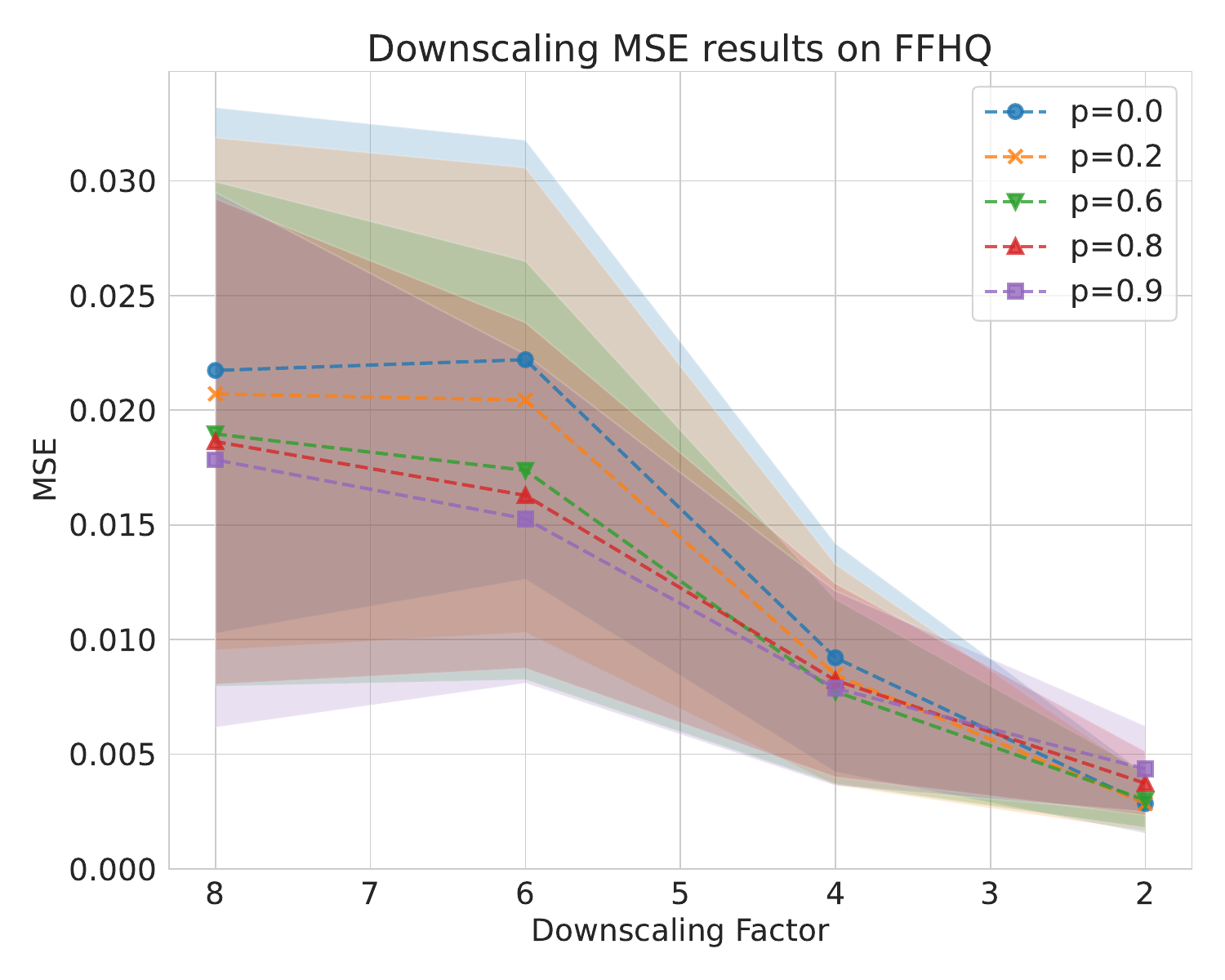}
        \caption{MSE per Downscaling Factor.}
        \label{fig:mse_ffhq_downscaling}
    \end{subfigure}
    \caption{Downscaling Results for FFHQ.}
    \label{fig:ffhq_downscaling}
\end{figure}

\begin{figure}[!htp]
    \centering
    \begin{subfigure}{0.48\textwidth}
        \centering
        \includegraphics[width=\textwidth]{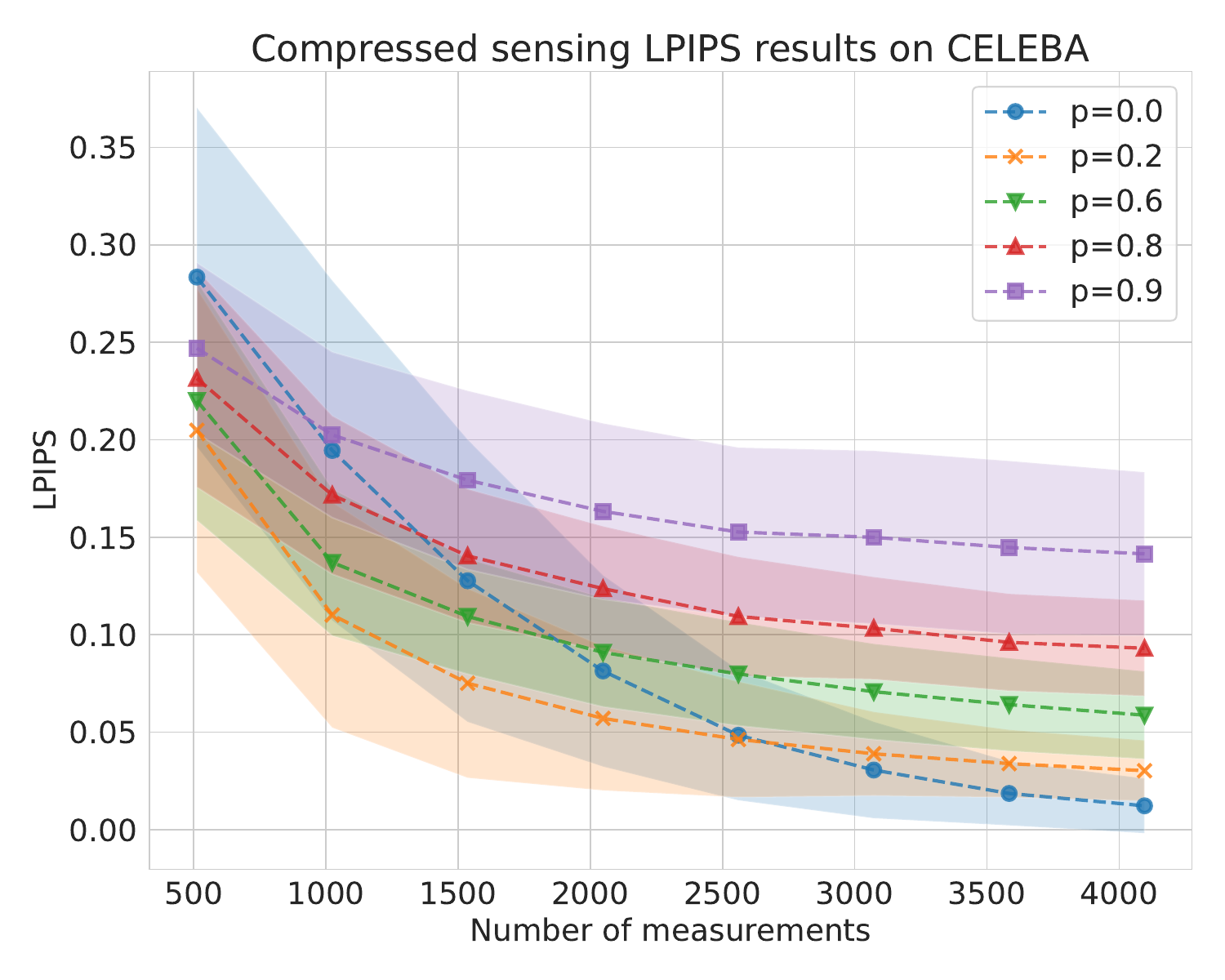}
        \caption{LPIPS per Number of Measurements.}
        \label{fig:lpips_celeba_compressed_sensing}
    \end{subfigure}
    \hfill
    \begin{subfigure}{0.48\textwidth}
        \centering
        \includegraphics[width=\textwidth]{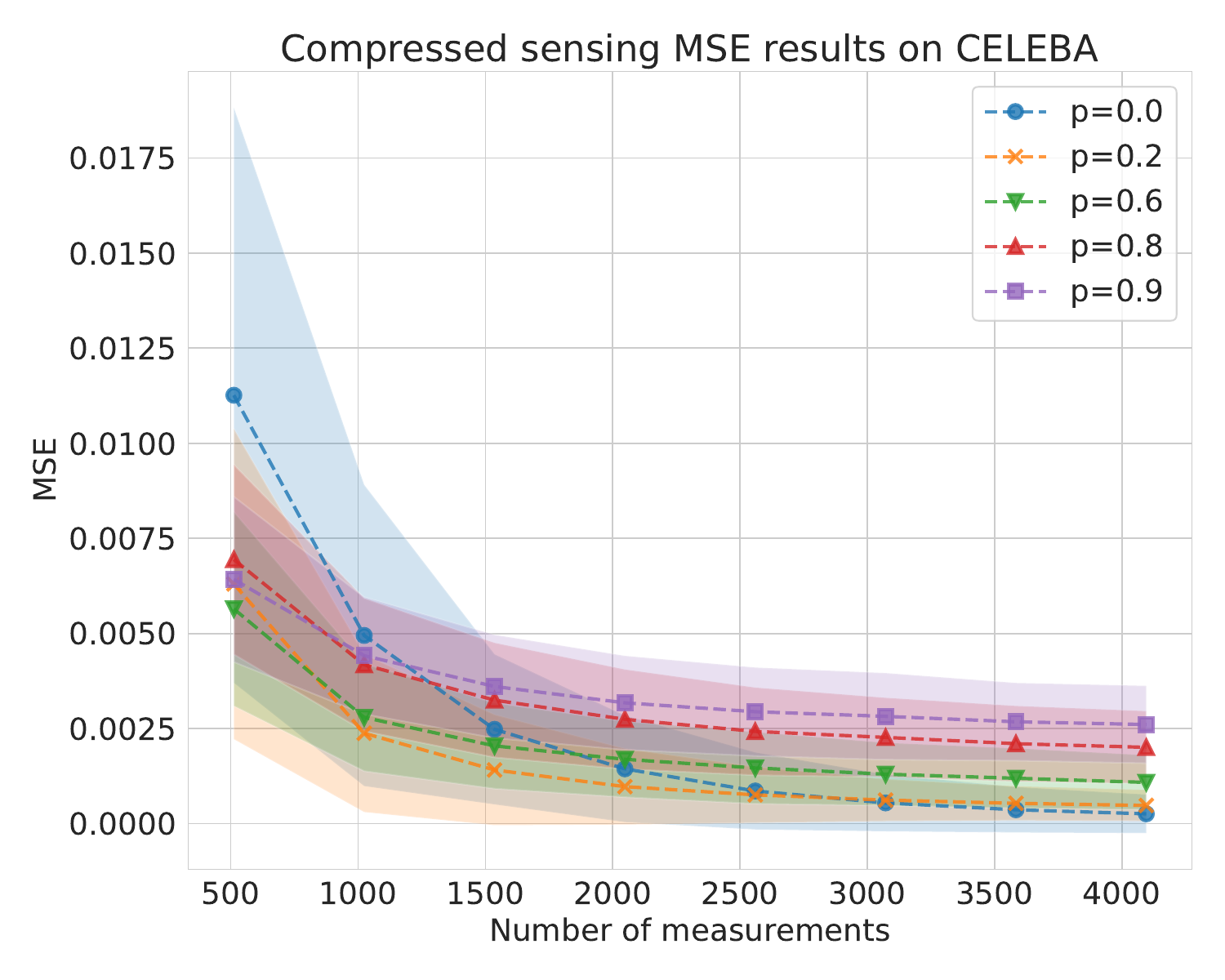}
        \caption{MSE per Number of Measurements.}
        \label{fig:mse_celeba_compressed_sensing}
    \end{subfigure}
    \caption{Compressed Sensing Results for Celeb-A.}
    \label{fig:celeba_compressed_sensing}
\end{figure}

\begin{figure}[!htp]
    \centering
    \begin{subfigure}{0.48\textwidth}
        \centering
        \includegraphics[width=\textwidth]{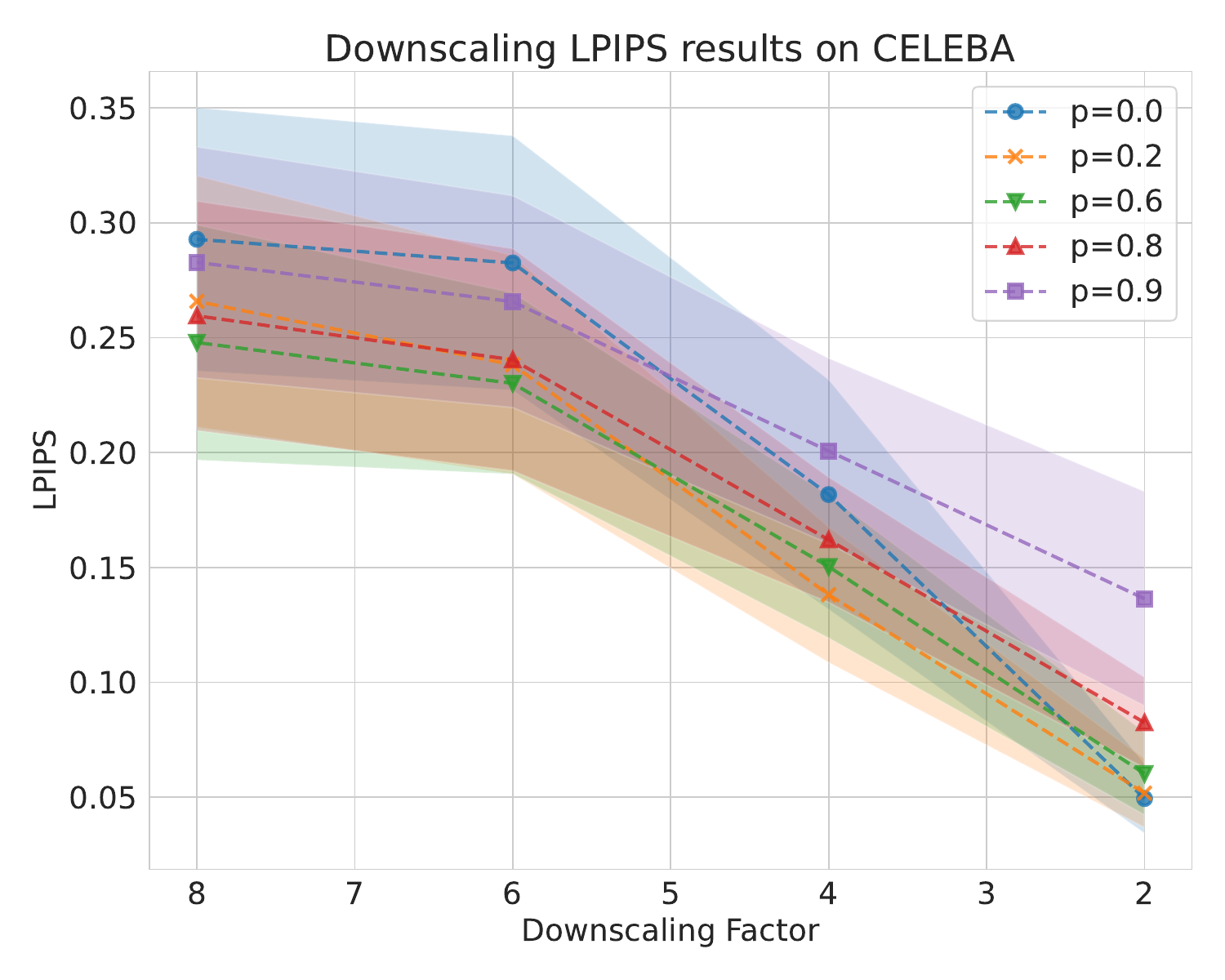}
        \caption{LPIPS per Downscaling Factor.}
        \label{fig:lpips_celeba_downscaling}
    \end{subfigure}
    \hfill
    \begin{subfigure}{0.48\textwidth}
        \centering
        \includegraphics[width=\textwidth]{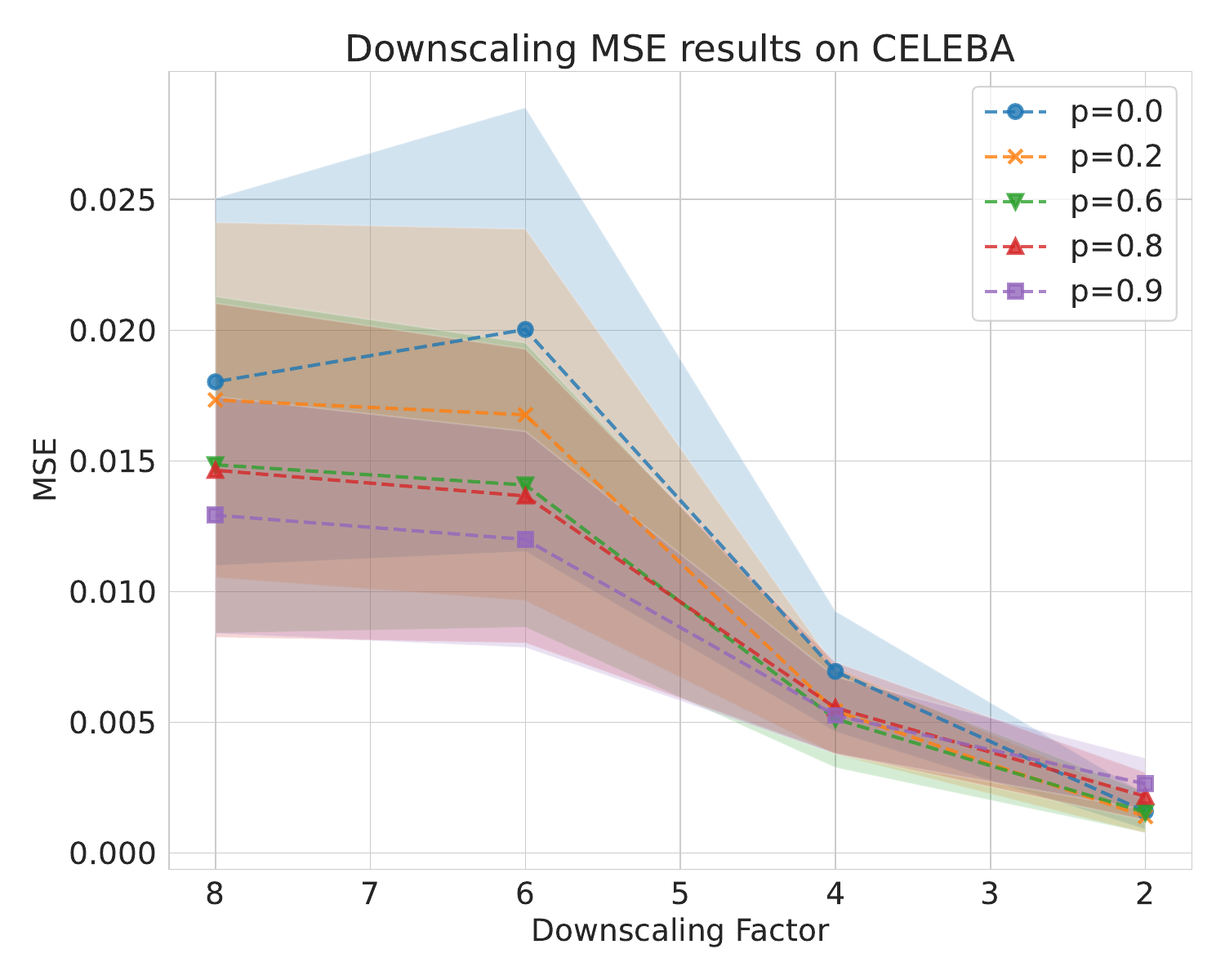}
        \caption{MSE per Downscaling Factor.}
        \label{fig:mse_celeba_downscaling}
    \end{subfigure}
    \caption{Downscaling Results for Celeb-A.}
    \label{fig:celeba_downscaling}
\end{figure}

\clearpage
For the following experiments, we compare a model trained without corruption to a model trained with randomly inpainted data at $p=0.6$. The evaluation dataset is FFHQ. The training size and hyperparameters for both models are the same. For both models, we extensively tune the sampling hyperparameters to maximize their performance (starting from the recommended hyperparameters from the DPS work). The finding we get in these additional experiments is that the model trained on corrupted data (inpainting) outperforms the model trained on clean data in the high corruption regime across all evaluation tasks.
\vspace{0.5cm}

\begin{figure}[!htp]
    \centering
    \begin{subfigure}{0.48\textwidth}
        \centering
        \includegraphics[width=\textwidth]{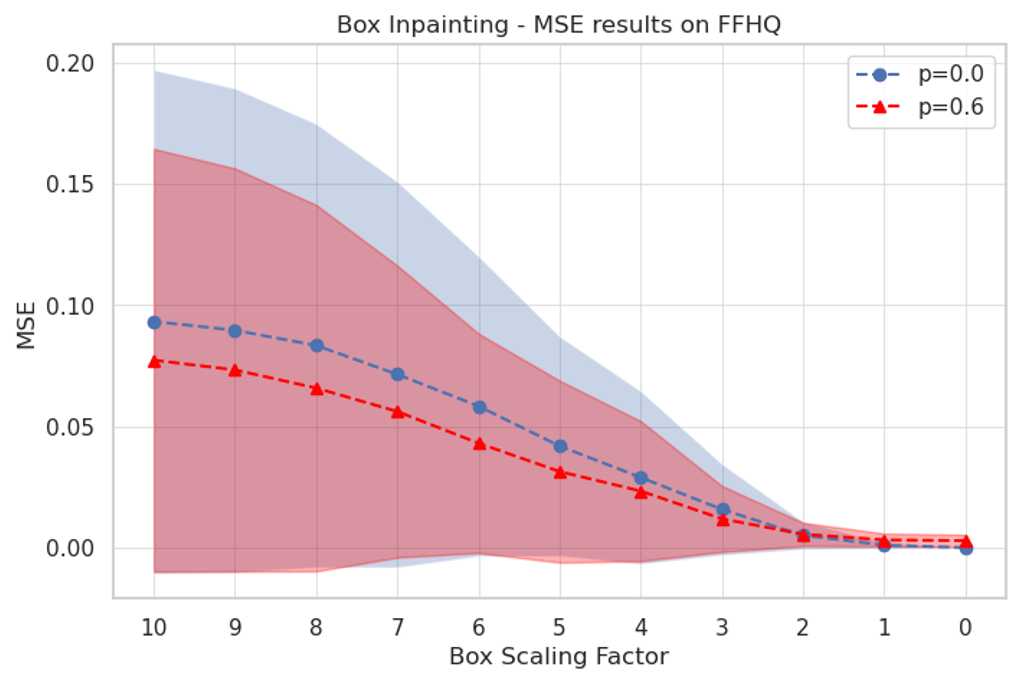}
    \end{subfigure}
    \caption{Box Inpainting results for FFHQ}
    \label{fig:ffhq_box_inpainting}
\end{figure}

\begin{figure}[!htp]
    \centering
    \begin{subfigure}{0.48\textwidth}
        \centering
        \includegraphics[width=\textwidth]{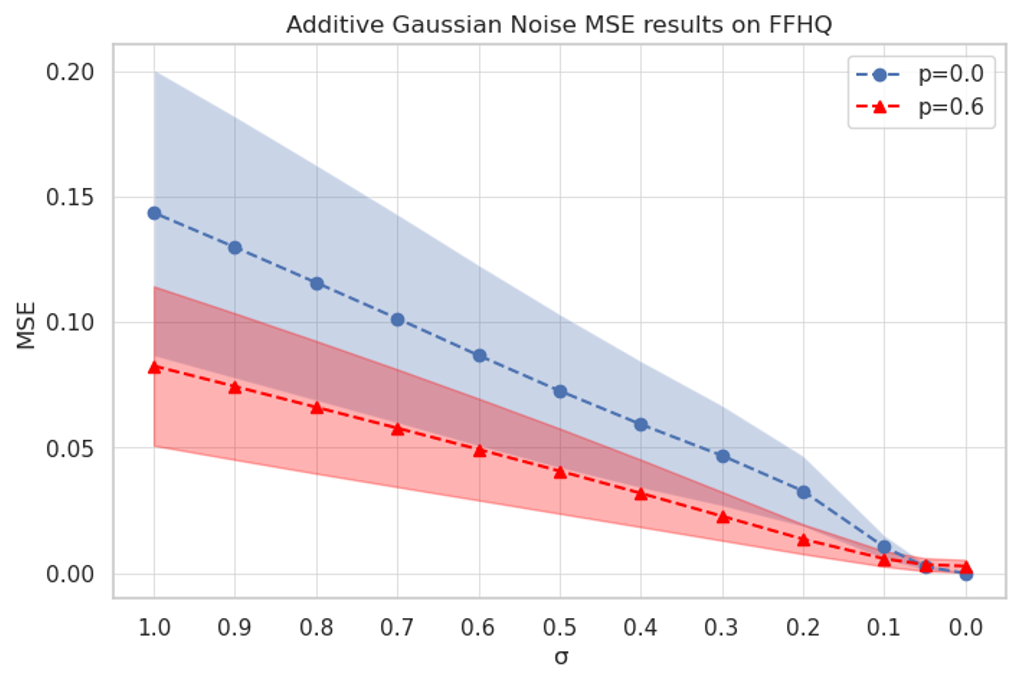}
    \end{subfigure}
    \caption{Additive Gaussian Noise results for FFHQ}
    \label{fig:ffhq_denoising}
\end{figure}

\begin{figure}[!htp]
    \centering
    \begin{subfigure}{0.48\textwidth}
        \centering
        \includegraphics[width=\textwidth]{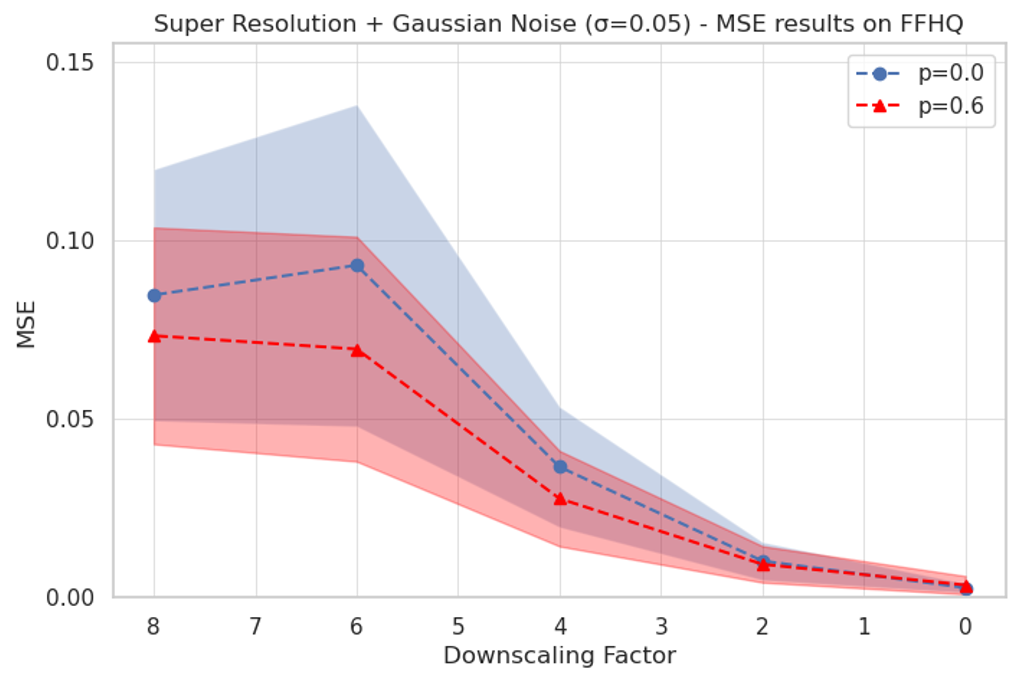}
    \end{subfigure}
    \caption{Super Resolution + Additive Gaussian Noise ($\sigma=0.05$) results for FFHQ}
    \label{fig:ffhq_super_res_noisy_setting}
\end{figure}

\newpage\section{Additional Natural Image Speed Results}

\begin{figure}[hbt!]
    \centering
    \begin{subfigure}{0.48\textwidth}
        \centering
        \includegraphics[width=\textwidth]{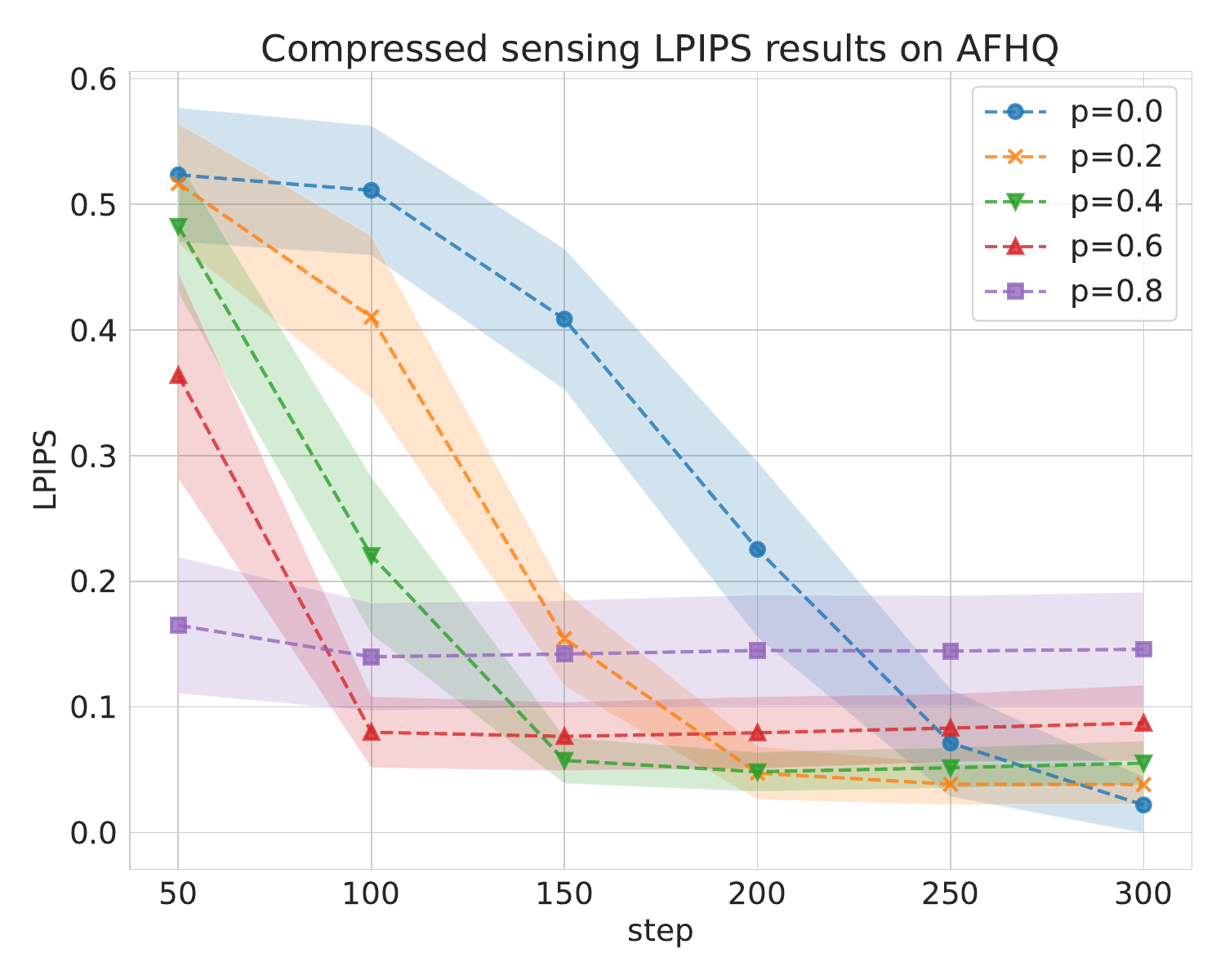}
        \caption{Compressed Sensing with $4000$ measurements per Number of Function Evaluations (NFEs).}
        \label{fig:speed_lpips_afhq_compressed_sensing}
    \end{subfigure}
    \hfill
    \begin{subfigure}{0.48\textwidth}
        \centering
        \includegraphics[width=\textwidth]{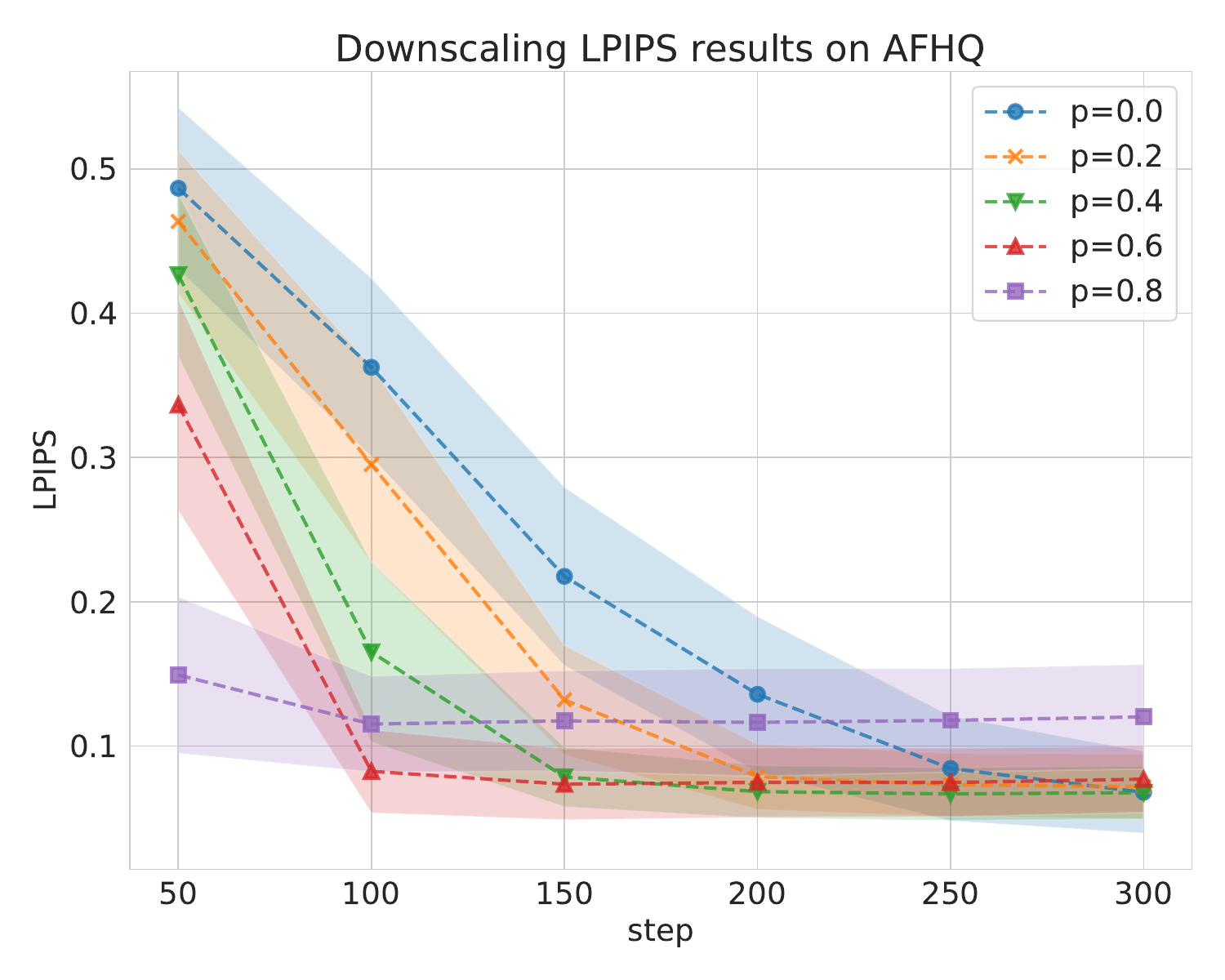}
        \caption{$2\times$ Super-Resolution per Number of Function Evaluations (NFEs).}
        \label{fig:speed_lpips_afhq_downscaling}
    \end{subfigure}
    \caption{Speed LPIPS performance plots for AFHQ.}
    \label{fig:speed_afhq_lpips}
\end{figure}

\begin{figure}[hbt!]
    \centering
    \begin{subfigure}{0.48\textwidth}
        \centering
        \includegraphics[width=\textwidth]{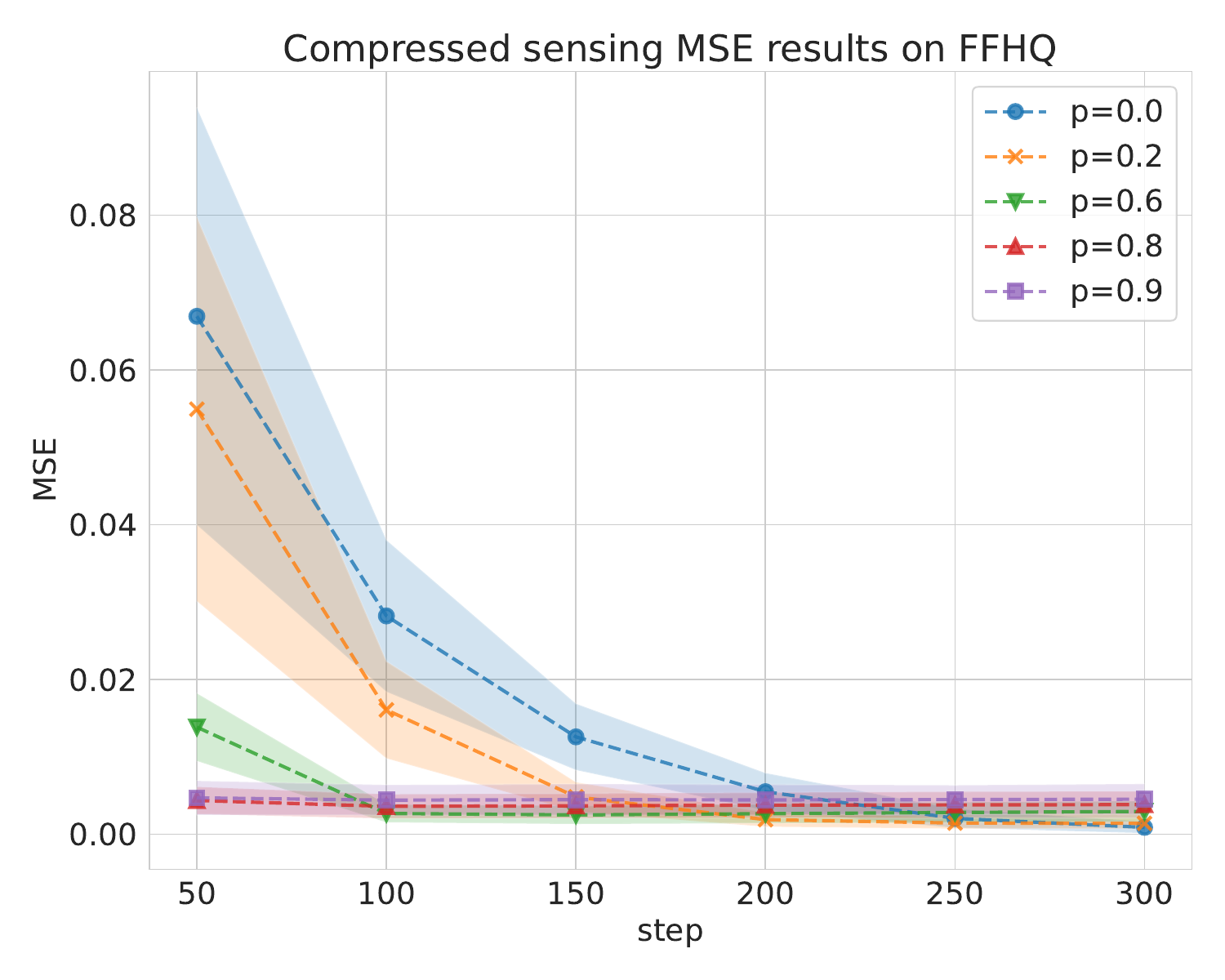}
        \caption{Compressed Sensing with $4000$ measurements per Number of Function Evaluations (NFEs).}
        \label{fig:speed_mse_ffhq_compressed_sensing}
    \end{subfigure}
    \hfill
    \begin{subfigure}{0.48\textwidth}
        \centering
        \includegraphics[width=\textwidth]{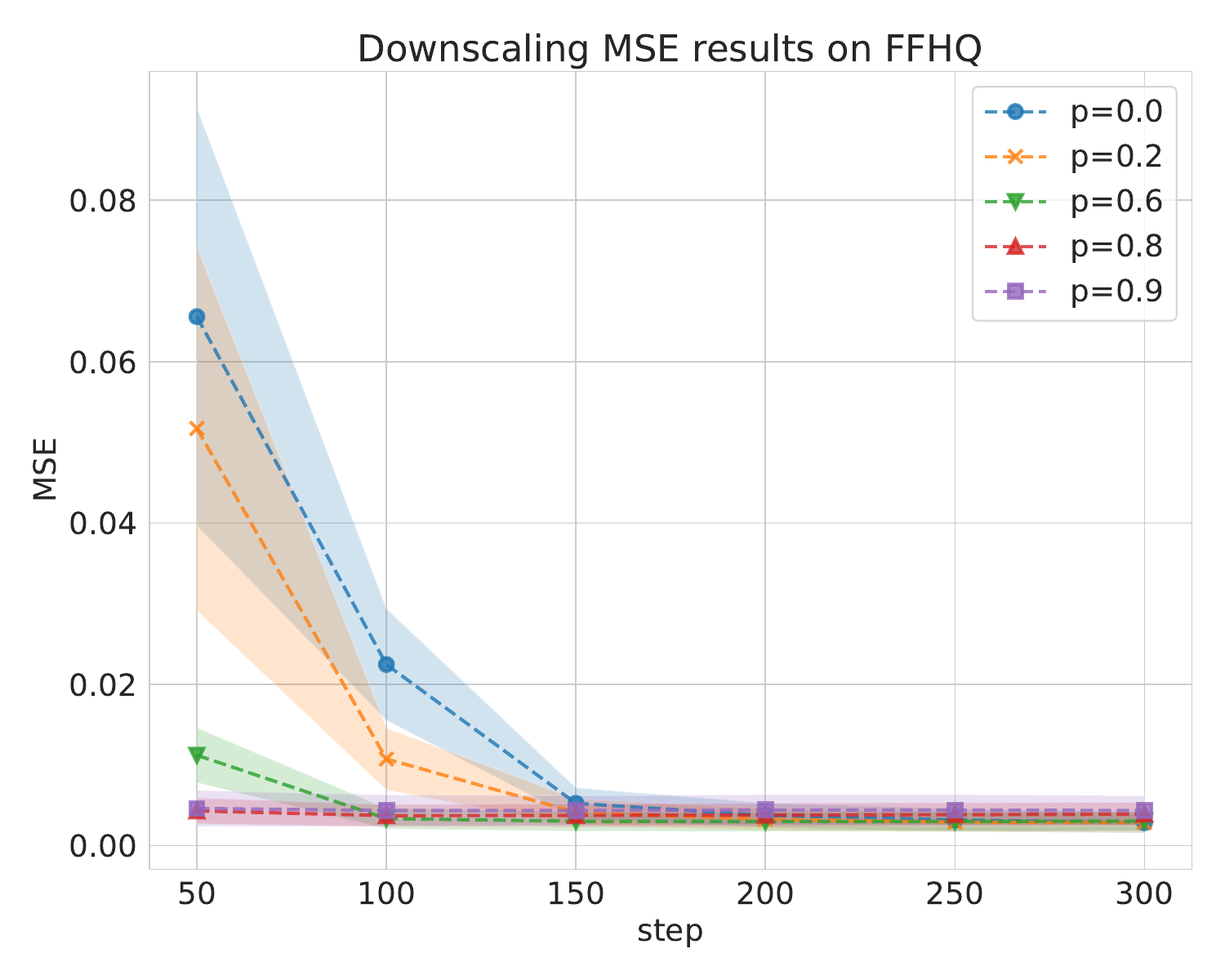}
        \caption{$2\times$ Super-Resolution per Number of Function Evaluations (NFEs).}
        \label{fig:speed_mse_ffhq_downscaling}
    \end{subfigure}
    \caption{Speed MSE performance plots for FFHQ.}
    \label{fig:speed_ffhq_mse}
\end{figure}

\begin{figure}[hbt!]
    \centering
    \begin{subfigure}{0.48\textwidth}
        \centering
        \includegraphics[width=\textwidth]{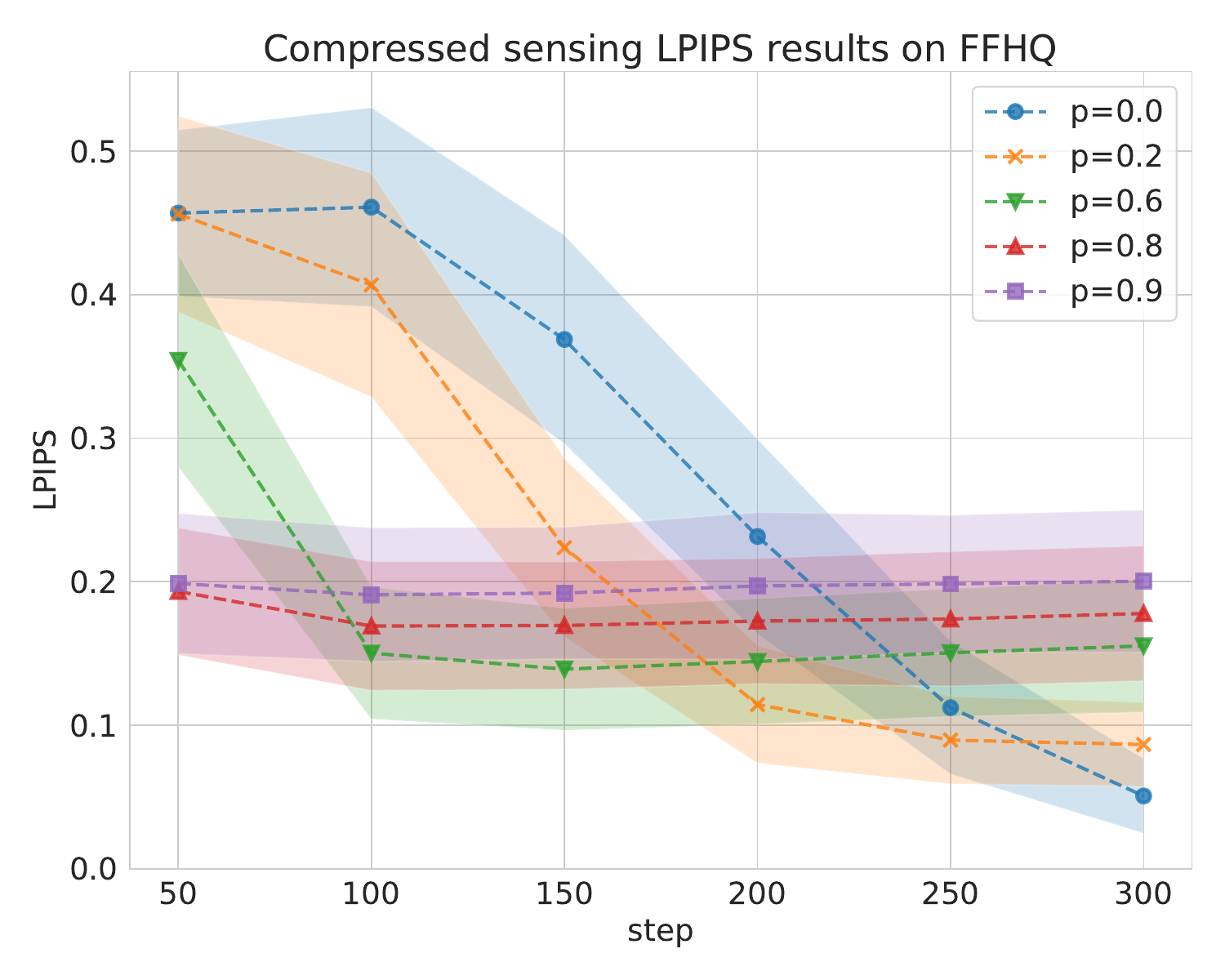}
        \caption{Compressed Sensing with $4000$ measurements per Number of Function Evaluations (NFEs).}
        \label{fig:speed_lpips_ffhq_compressed_sensing}
    \end{subfigure}
    \hfill
    \begin{subfigure}{0.48\textwidth}
        \centering
        \includegraphics[width=\textwidth]{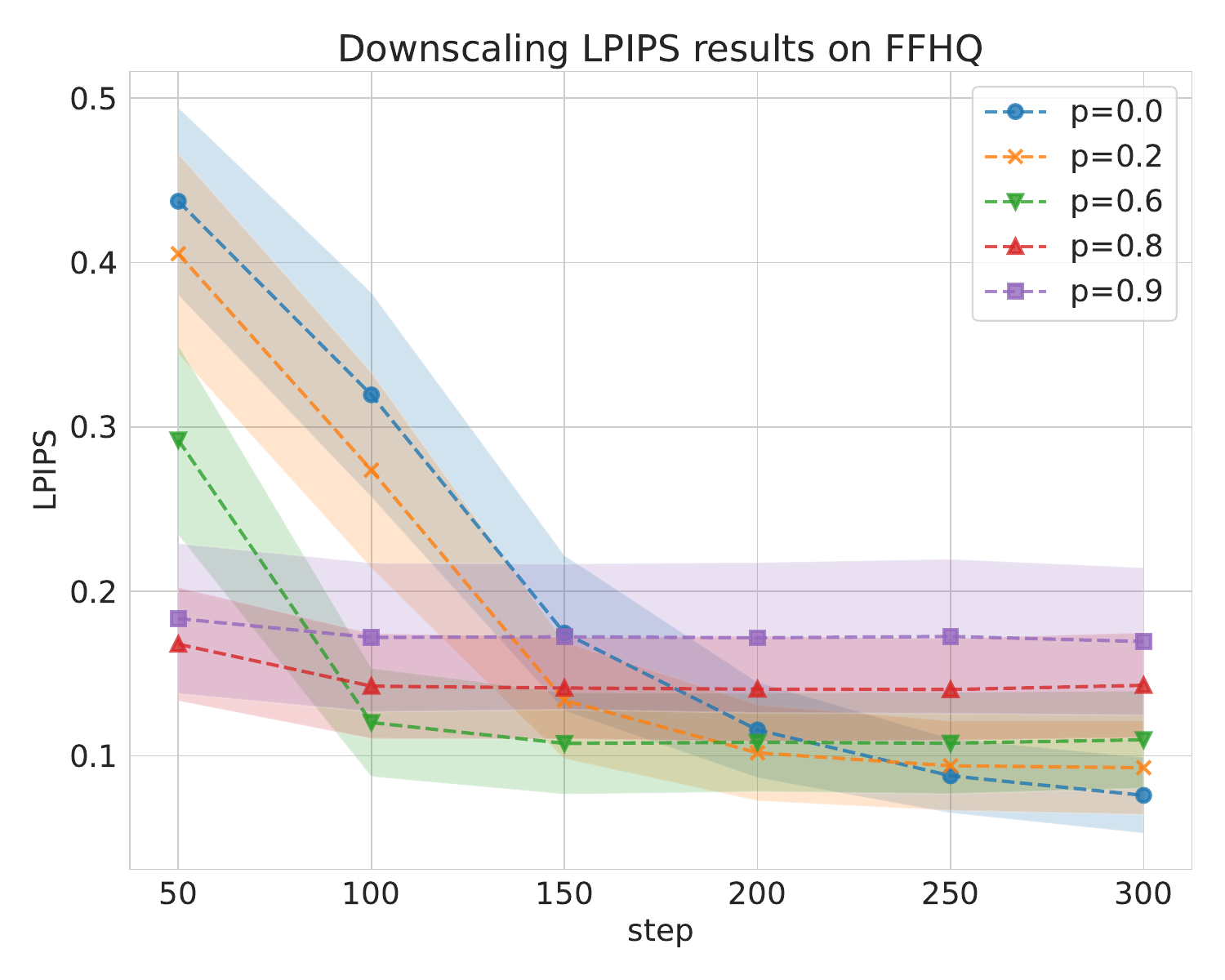}
        \caption{$2\times$ Super-Resolution per Number of Function Evaluations (NFEs).}
        \label{fig:speed_lpips_ffhq_downscaling}
    \end{subfigure}
    \caption{Speed LPIPS performance plots for FFHQ.}
    \label{fig:speed_ffhq_lpips}
\end{figure}

\begin{figure}[hbt!]
    \centering
    \begin{subfigure}{0.48\textwidth}
        \centering
        \includegraphics[width=\textwidth]{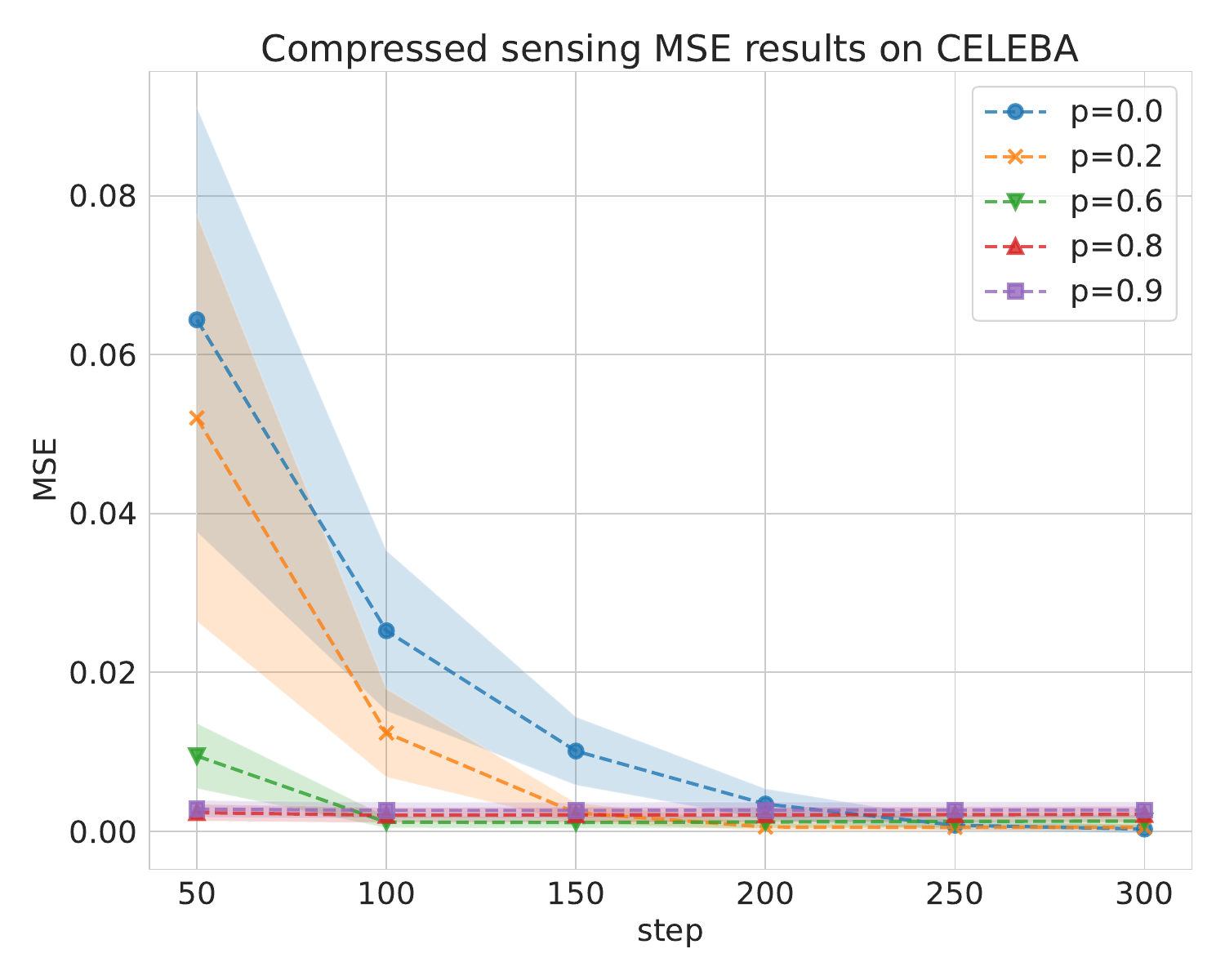}
        \caption{Compressed Sensing with $4000$ measurements per Number of Function Evaluations (NFEs).}
        \label{fig:speed_mse_celeba_compressed_sensing}
    \end{subfigure}
    \hfill
    \begin{subfigure}{0.48\textwidth}
        \centering
        \includegraphics[width=\textwidth]{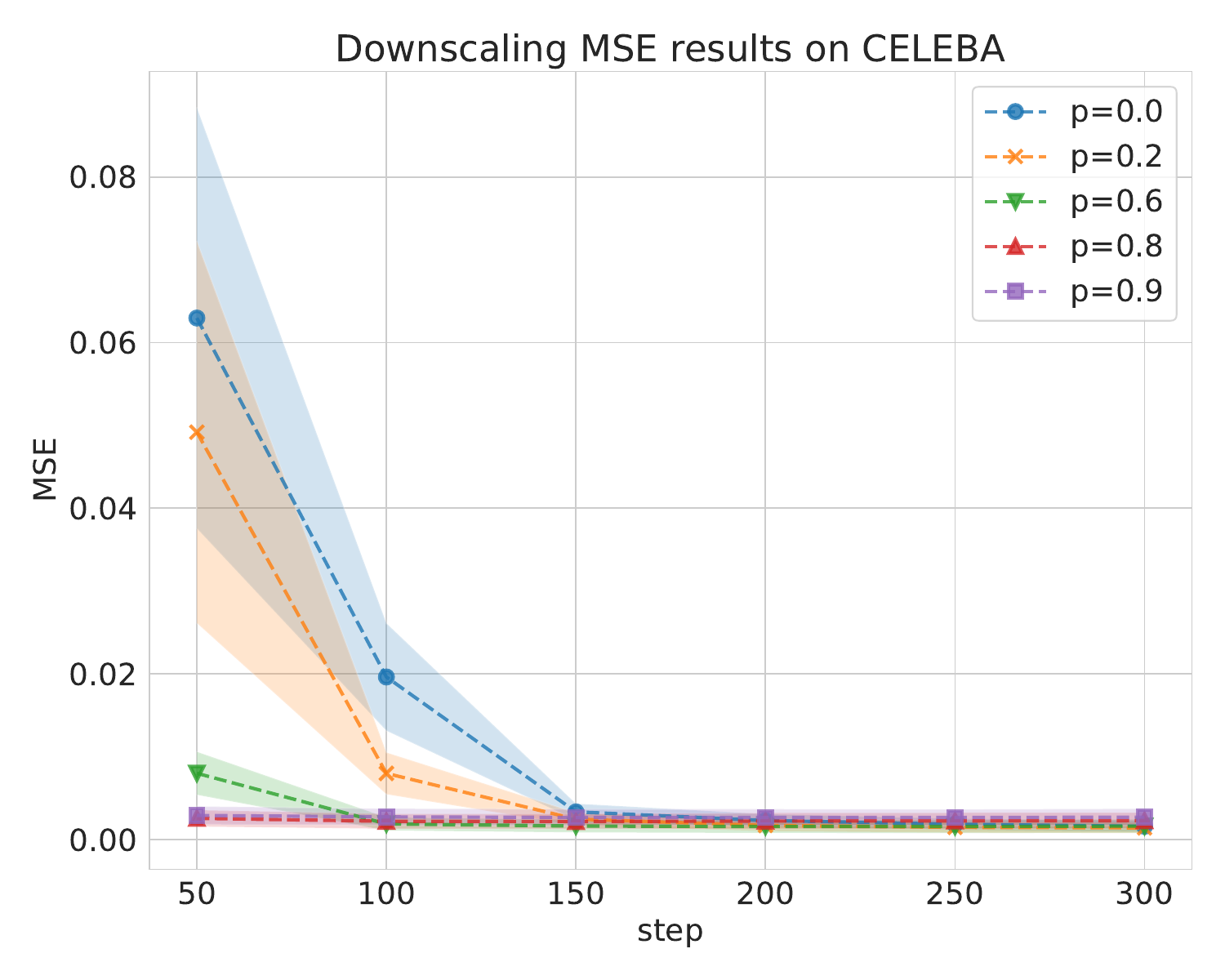}
        \caption{$2\times$ Super-Resolution per Number of Function Evaluations (NFEs).}
        \label{fig:speed_mse_celeba_downscaling}
    \end{subfigure}
    \caption{Speed MSE performance plots for Celeb-A.}
    \label{fig:speed_celeba_mse}
\end{figure}

\clearpage

\begin{figure}[hbt!]
    \centering
    \begin{subfigure}{0.48\textwidth}
        \centering
        \includegraphics[width=\textwidth]{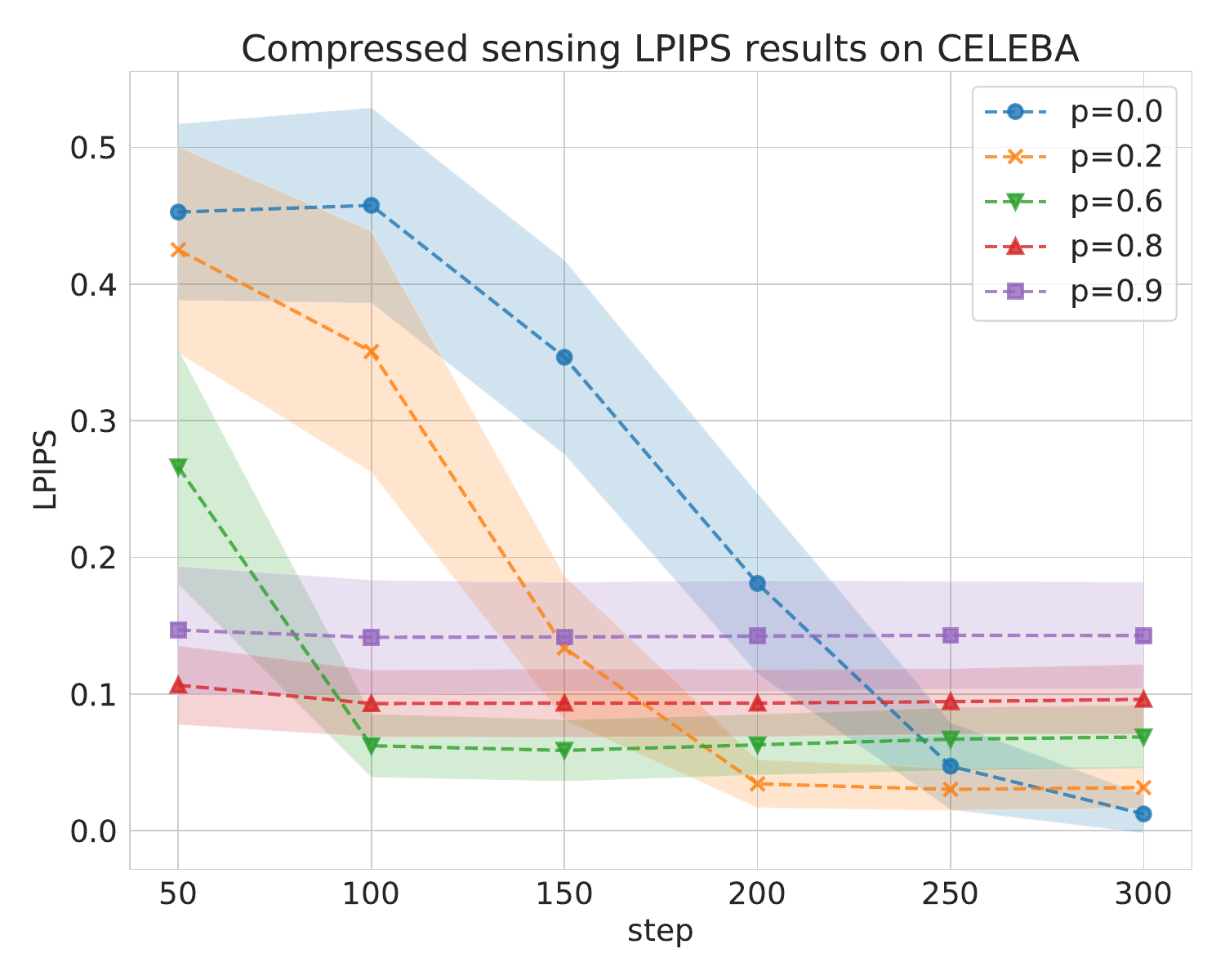}
        \caption{Compressed Sensing with $4000$ measurements per Number of Function Evaluations (NFEs).}
        \label{fig:speed_lpips_celeba_compressed_sensing}
    \end{subfigure}
    \hfill
    \begin{subfigure}{0.48\textwidth}
        \centering
        \includegraphics[width=\textwidth]{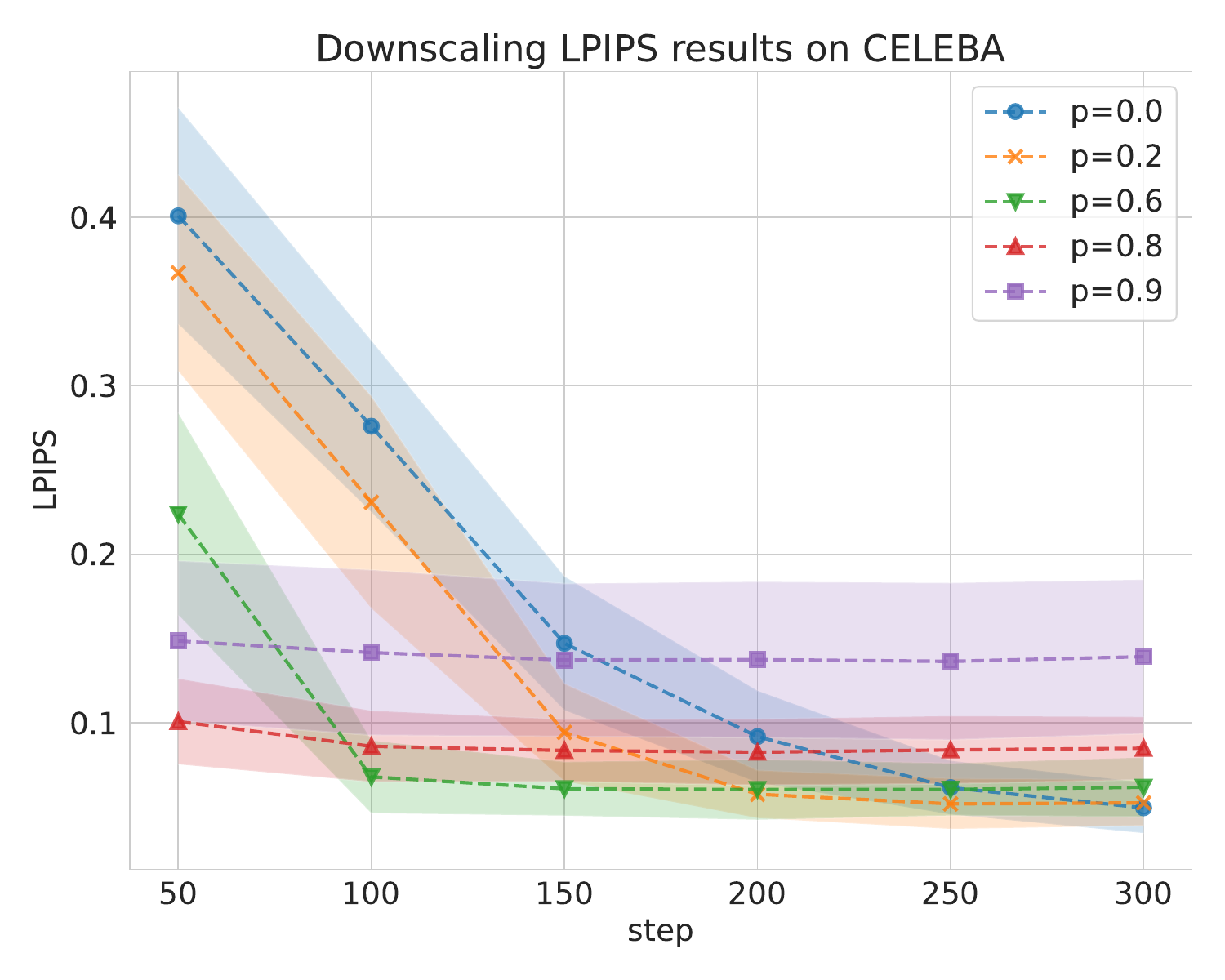}
        \caption{$2\times$ Super-Resolution per Number of Function Evaluations (NFEs).}
        \label{fig:speed_lpips_celeba_downscaling}
    \end{subfigure}
    \caption{Speed LPIPS performance plots for Celeb-A.}
    \label{fig:speed_celeba_lpips}
\end{figure}

\section{Limitations and Future Directions.} 
\label{sec:limitations}
Our work has several limitations. First, there is a lack of theoretical understanding of the observed experimental phenomenon: we do not have a good grasp of why models trained on highly corrupted data lead to better priors for inverse problems of another corruption type. Second, we only tested the reconstruction performance of our models using the DPS algorithm. Several other recent algorithms have been developed for solving inverse problems with diffusion models and it is unknown whether our findings generalize for these reconstruction algorithms. Finally, our method relies on the existence of Ambient Diffusion models. There are many more models available trained on clean data than models trained on highly corrupted data.

The auto-calibration signal (ACS) region is shared across training samples, which could lead to an over-representation of certain k-space regions. While our current approach does not explicitly address this, introducing a diagonal weighting matrix in the loss function, as suggested by~\cite{mri_no_data}, is a promising direction to mitigate this issue. 

Currently, we define $A_{train}$ according to Eq~\ref{eq:further_corrupted_mri}. This is convenient because it lets us use an image-to-image network architecture such as that used by EDM. An alternative could be to define $A_{train}$ directly as in Eq~\ref{eq:coil_mri}, in which case the network would go from multi-coil k-space to an image. This network could be preceded by an IFFT, which would effectively be a multi-coil-to-image network, and could be standardized using coil compression~\citep{zhang2013coil}. Such a network architecture could be interesting and potentially more expressive, as we currently collapse the multi-coil information through zero-filling adjoint. Another possibility is to define $A_{train}$ as the pseudo-inverse applied to $A$, which could be solved efficiently with the conjugate gradient algorithm.

\section{Acknowledgements}
This research has been supported by NSF Grants CCF 1763702, NSF CCF-2239687 (CAREER), AF1901292, CNS 2148141, Tripods CCF 1934932, NSF AI Institute for Foundations of Machine Learning (IFML) 2019844, the Texas Advanced Computing Center (TACC) and research gifts by Western Digital, WNCG IAP, UT Austin Machine Learning Lab (MLL), Cisco and the Archie Straiton Endowed Faculty Fellowship. Giannis Daras has been supported by the Onassis Fellowship (Scholarship ID: F ZS 012-1/2022-2023), the Bodossaki Fellowship and the Leventis Fellowship.

\end{document}